\newcommand{\dd}{\mathrm{d}}
\newcommand{\F}{\mathcal{F}}
\newcommand{\real}{\mathbb{R}}
\newcommand{\DDKL}[2]{\mathcal{D}_{\mathrm{KL}}(#1\|#2)}
\newcommand{\OO}{\mathcal{O}}
\newcommand{\EE}[1]{\mathbb{E}\left[#1\right]}
\newcommand{\EEcc}[2]{\mathbb{E}\left[\left.#1\right|#2\right]}
\newcommand{\ra}{\rightarrow}
\newcommand{\iprod}[2]{\left\langle#1,#2\right\rangle}
\newcommand{\norm}[1]{\left\|#1\right\|}
\newcommand{\abs}[1]{\left|#1\right|}
\newcommand{\pa}[1]{\left(#1\right)}
\newcommand{\bZ}{\overline{Z}}
\newcommand{\bz}{\overline{z}}
\newcommand{\loss}{\ell}
\newcommand{\A}{\mathcal{A}}
\newcommand{\E}{\mathbb{E}}
\newcommand{\Z}{\mathcal{Z}}
\newcommand{\D}{\mathcal{D}}
\newcommand{\Gen}{\mathrm{Gen}}
\newcommand{\W}{\mathcal{W}}
\newcommand{\Rr}{\mathbb{R}}
\newcommand{\Regret}{\mathrm{Regret}}
\renewcommand{\hat}{\widehat}
\newtheorem{assumption}{Assumption}
\title{Generalization bounds for mixing processes \\ via delayed online-to-PAC conversions}
\begin{document}

\maketitle

\begin{abstract}
We study the generalization error of statistical learning algorithms in a non i.i.d.\ setting, where the training data is sampled from a stationary mixing process.
We develop an analytic framework for this scenario based on a reduction to online
learning with delayed feedback. In particular, we show that the existence of an
online learning algorithm with bounded regret (against a fixed statistical learning
algorithm in a specially constructed game of online learning with delayed feedback) implies low generalization error of said statistical learning method even if
the data sequence is sampled from a mixing time series. The rates demonstrate a
trade-off between the amount of delay in the online learning game and the degree
of dependence between consecutive data points, with near-optimal rates recovered in a number of well-studied settings when the delay is tuned appropriately as
a function of the mixing time of the process.
\end{abstract}

\section{Introduction}
\label{sec:: intro}
In machine learning, generalization means the ability of a model to infer patterns from a dataset of training examples
and apply them to analyze previously unseen data \citep{shalevBook2014}. The gap in accuracy between the model's 
predictions on new data and those on the training set is usually referred to as \emph{generalization error}. Providing 
upper bounds on this quantity is a central goal in statistical learning theory. Classically, bounds based on notions of  
complexity of the model's hypothesis space (e.g., its VC dimension or Rademacher complexity) were used to 
provide uniform worst-case guarantees (see \citealp{Bousquet2004, 
vapnik2013nature,shalevBook2014})
. However, these results are often too loose to be applied to over-parameterised models such as deep neural networks \citep{zhang21}. As a consequence, several approaches have been proposed to obtain algorithm-dependent generalization bounds, which can adapt to the problem and be much tighter in practice than their uniform counterparts. 
Often, the underlying idea is that if the algorithm's output does not have a too strong dependence on the specific input 
dataset used for the training, then the model should not be prone to overfitting, and thus generalize well. Examples of 
results that build onto these ideas are stability bounds, information-theoretic bounds, and PAC-Bayesian bounds (see, 
e.g.,
\citealp{bousquet02, russo20, hellstrom2023generalization, alquier2023userfriendly}).

Most results in the literature focus on the i.i.d.~setting, where the training dataset is made of independent 
draws from some underlying data distribution. However, for several applications, this assumption is far from realistic. 
For instance, it excludes the case where observations received by the learner
have some inherent temporal dependence, as it is the case for stock prices, daily energy consumption, or 
sensor data from physical environments \citep{7046047, takeda2016using}. This calls for the development of theory for 
addressing non-i.i.d.~data. 
A common approach in the extant literature is to consider a class of non-i.i.d.~data-generating processes usually 
referred to as stationary $\beta$-mixing or $\varphi$-mixing processes. This assumption, together with a 
``blocking'' trick introduced by \cite{yu1994rates}, has led to a few results in the literature: \cite{meir00}, 
\cite{mohri08}, \cite{shalizi13}, and \cite{wolfer19minimax} provided uniform worst-case generalization bounds, 
\cite{steinwart09} and \cite{agarwal2012generalization} discussed excess risk bounds (comparing the algorithm's output 
with the best possible hypothesis), while \cite{mohri10a} gave bounds based on a stability analysis (in the sense of 
\citealp{bousquet02}).

In the present paper, we propose a new framework for proving generalization bounds for the non-i.i.d.~setting, 
which take a form that is similar in spirit to PAC-Bayesian bounds \citep{guedj19, alquier2023userfriendly}: 
high-probability upper bounds on the expected generalization error of randomized learning algorithms. We achieve this 
by combining the blocking argument by \cite{yu1994rates} to manage the concentration of sums of correlated random 
variables, with the recent \emph{online-to-PAC conversion} technique recently proposed by \cite{lugosi2023online}.
Using their framework we show a new way to obtain generalization bounds for stationary dependent processes that satisfy 
a certain ``short-memory'' property, intuitively meaning that data points that are closer in time are more heavily 
dependent on each other. Our assumption slightly relaxes the commonly considered $\beta$-mixing condition in that we 
only need it to hold for a specific class of bounded loss functions (as opposed to requiring stronger conditions 
phrased in terms of total variation distance). Among other results, this allows us to prove PAC-Bayesian generalization 
bounds for mixing processes. This complements previous work on such bounds that have largely considered mild 
relaxations of the i.i.d.~condition such as assuming that the data has a martingale structure (e.g., 
\citealp{seldin2012pac, chugg23, haddouche2023pac}). A notable exception is the work of 
\citet{ralaivola2010chromatic}, who considered data sets whose dependence structure is described via graphs, 
and proved generalization bounds that scale with various graph properties describing the overall strength of 
dependencies. One of the applications of their results is a generalization bound for $\beta$-mixing processes, 
comparable in nature to some of our results. We return to discussing the similarities and differences in 
Section~\ref{sec:conc}. Further relevant works are those of \cite{alquier2012model}, \cite{alquier2013prediction}, and 
\cite{eringis2022pacbayesianlike, eringis2024pac}, who provided generalization bounds for a sequential prediction 
setting where both the data-generating process and the hypothesis class used for prediction are stable dynamical 
systems. Their results are proved under some very specific conditions on these systems, and their guarantees 
involve unspecified problem-dependent constants that may be large. In contrast, our bounds hold under general, 
simple-to-verify conditions and feature explicit constants.

The rest of the paper is organized as follows. In Section~\ref{sec:: Problem} we properly define the generalization 
error of a statistical learning algorithm for both i.i.d.~and non-i.i.d.~cases, and state our main assumption 
on the data dependence. Our main contribution lies in Section~\ref{sec:: Proving_gen_bound}, where after recalling 
the results of \citet{lugosi2023online} for the i.i.d.~setting we show how to adapt their technique to stationary 
mixing processes. In Section~\ref{sec:: Application} we provide some concrete results of the bounds we can obtain 
through the resulting online-to-PAC conversion methodology. Finally in Section~\ref{sec:: extension} we extend our 
results to the setting where the hypothesis class itself may consist of  dynamical systems.

\textbf{Notation.} For a distribution over hypotheses $P \in \Delta_{\W}$ and bounded function $f: \W \to \mathbb{R}$ 
we write $\langle P,f\rangle$ to refer to the expectation of $\E_{W\sim P}[f(W)]$. We denote $\DDKL{P}{Q}= 
\E_{X \sim P}\left[\ln \left(\frac{\dd P}{\dd Q}(X)\right)\right]$ to refer to the Kullback--Leibler divergence. We use 
$\norm{\cdot}$ to denote a norm on the Banach space $\mathcal{Q}$ of the finite signed measures, and $\norm{\cdot}_*$ 
the corresponding dual norm on the dual space $\mathcal{Q}^*$ of measurable functions $f$ on $\W$ such that $\norm{f}_* 
= \sup_{Q\in \mathcal{Q}:\norm{Q}\leq 1} \langle Q,f\rangle.$ 

\section{Preliminaries}
\label{sec:: Problem}
\noindent
The classical statistical learning framework usually considers a dataset $S_n = (Z_1,...,Z_n)$, made of $n$ 
i.i.d.~elements drawn from a 
distribution $\mu$ over a measurable instance space $\mathcal{Z}$. Often, one can think of each $Z_i$ as a feature-label pair $(X_i, Y_i)$. Furthermore, we are given a measurable class $\W$ of 
hypotheses and a loss function $\ell: \W \times \mathcal{Z} \to \real_+$, with $\ell(w,z)$ measuring the quality of 
the hypothesis $w\in\W$ on the data instance $z\in\Z$. 
For any given hypothesis $w \in \mathcal{W}$, two key objects of interest are the \textit{training error} $\widehat {\mathcal{L}}(w,S_n) = \frac{1}{n} \sum_{i=1}^n \ell(w, Z_i)$ and the \textit{test error} $\mathcal{L}(w) = \E_{Z'\sim\mu}[\ell(w, Z')]$, where the random element $Z'$ has the same 
distribution as $Z_i$ and is independent of $S_n$. 
A learning algorithm $\A: \mathcal{Z}^n \to \W$ maps the training sample to an hypothesis in $\W$. 
More generally, we will focus on randomized learning algorithms, returning a probability distribution  $P_{W_n|S_n} 
\in \Delta_{\W}$ over $\mathcal{W}$, conditionally on $S_n$ (deterministic algorithms can be recovered as special cases, 
whose the outputs are Dirac distributions). The ultimate goal of the learner is to minimize the test error. Yet, this quantity cannot be computed without knowledge of the data generating distribution $\mu$. In practice, one typically relies on the training error in order to gauge the quality of the algorithm. For an algorithm $\A:S_n\mapsto 
P_{W_n|S_n}$, we define the \emph{generalization error} as the expected gap between training and test error:
 \looseness=-1
\begin{equation*}
\Gen(\mathcal{A},S_n) = \E\left[\mathcal{L}(W_n)-\hat {\mathcal{L}}(W_n,S_n)\Big 
|S_n\right].
\end{equation*}
The expectation in the above expression integrates over the randomness in the output of the algorithm $W_n \sim 
P_{W_n|S_n}$, conditionally on the sample $S_n$. We stress that the test error is generally \emph{not} equal to the 
mean of the training error, due to the dependence of $W_n$ on the training data, which is precisely the challenge that 
necessitates studying conditions under which the generalization error is small. 

We extend the previous setting by considering the case where the data have an intrinsic temporally ordered structure, and come in the form of a stationary process $(Z_t)_{t \in \mathbb{N}^*} \sim \nu$. Formally, we assume that the joint marginal distribution of any block $(Z_{t},Z_{t-1},\dots,Z_{t-i})$ is the same as the distribution of $(Z_{t+j},Z_{t+j-1},\dots,Z_{t+j-i})$ for any $t$, $i$ and $j$, but the data points are not necessarily independent of each other. In particular, the marginal distribution of $Z_t$ is constant and is denoted by $\mu$. 
In such a setting, it is natural to continue to use the definition of the test loss and generalization error given 
above, although with the understanding that $\mu$ now refers to the marginal distribution of an independent copy of 
$Z_1$, a sample point from a stationary non-i.i.d.~process. 
We remark here that other notions of the test loss may also be considered, and the framework that we propose can be 
extended to most natural definitions with little work (but potentially large notational overhead). In Section~\ref{sec:: 
extension}, we  provide such an extension for a more general setting where the hypotheses themselves are allowed to have 
memory and the process may not be as strongly stationary as our assumption above requires. \looseness=-1

To obtain generalization results we need some control on how strong the dependencies between different datapoints are allowed to be. We hence consider the following assumption.
\begin{assumption}
\label{assumption:: phi-mix}
    There exists a non-increasing sequence $(\phi_d)_{d \in \mathbb{N}^*}$ of non-negative real numbers such that, for all $w \in \mathcal{W}$ and all $t \in \mathbb{N}^*$:
    $$\E\left[\mathcal{L}(w)-\ell(w,Z_t)\Big| \F_{t-d}\right] \leq \phi_d\,,$$
    where $\mathcal{L}(w)=\E_{Z' \sim \mu}[\ell(w,Z')]$, with $Z'$ being independent on the process $(Z_t)_{t\in\mathbb N^*}$ and having as distribution the stationary marginal $\mu$ of the $Z_t$.
    \end{assumption}

    \noindent
    The intuition behind this assumption is that the loss associated with the observations $Z_t$ becomes almost 
independent of the past after $d$ steps, enabling us to treat each sequence of the form 
$(Z_t,Z_{t+d},\dots,Z_{t+(n-t)d})$ as an approximately i.i.d.~sequence.
    Note that this assumption differs from the usual $\beta$-mixing assumption that requires the 
    distribution of $Z_t | \F_{t-d}$ to be close (in total variation) to the marginal distribution $\mu$ for all $t$. Our assumption is somewhat weaker in the sense that it only requires the expected losses 
    under these distributions to be close, and only a one-sided inequality is required.  It is easy to verify that our assumption is satisfied if the process is $\beta$-mixing in the usual sense and the losses are bounded in $[0,1]$.

    It is worth noticing that while our assumption can look like a small ``cosmetic'' improvement over the standard $\beta$-mixing, it can actually be much weaker. For a concrete example, consider let $\ell(w,Z_t)=\ell(w,Z'_t)+\varepsilon_t$, where the $Z_t'$ are part of an i.i.d.~sequence, and $\varepsilon_t$ is sampled from a bounded $\beta$-mixing process, with $\alpha$ a small constant. Now, for any $w$, $\ell(w,Z_t)$ is clearly a $\beta$-mixing process (inheriting the properties of $\varepsilon_t$), independently of the choice of $\alpha$. In contrast, the mixing rate as defined in our assumption improves linearly with $\alpha$, and vanishes as this parameter approaches zero. As a matter of fact, $\beta$-mixing conditions are very strict in that they require mixing in terms of the entire distribution of the loss, and ignores the ``scale'' at which non-stationarity impacts the outcomes.
    
\section{Proving generalization bounds via online learning} \label{sec:: Proving_gen_bound}

This section introduces our main contribution: a framework for proving generalization bounds for
statistical learning on non-i.i.d.~data via a reduction to online learning. The framework of online
learning focuses on algorithms that aim to improve performance (measured in terms of a given
cost function) incrementally as new information becomes available, often without any underlying
assumption on how data are generated. The online learner’s performance is typically measured in
terms of its regret, defined as the the difference between the cumulative cost of the online learner and
that of a fixed comparator. We refer to the monographs \cite{cesa2006prediction} and \cite{orabona2019modern} for comprehensive overviews on online learning and regret analysis. Recently, \cite{lugosi2023online} established a connection between upper bounds on the regret and generalization bounds,
showing that the existence of a strategy with a bounded regret in a specially designed online game
translates into a generalization bound, via a technique dubbed online-to-PAC conversion. Their
focus is on the i.i.d. setting, where the training dataset is made of independent draws. Here, we show
that this framework can naturally be extended beyond the i.i.d. assumption.

In what follows, we briefly review the setup of \cite{lugosi2023online} in Section~\ref{subsec:: iid} 
and then describe our new extension of their model to the non-i.i.d.~case in Section~\ref{subsec:: non-iid}. In 
particular, we prove a high-probability bound for the generalization error of any statistical learning algorithm learnt 
with a stationary mixing process verifying Assumption~\ref{assumption:: phi-mix}. 
 \subsection{Online-to-PAC conversions for i.i.d.~data}
 \label{subsec:: iid}
The main idea of \citet{lugosi2023online} is to introduce an online learning \textit{generalization game}, where the following steps are repeated for a sequence of rounds $t=1,2,\dots,n$:
\begin{itemize}
\setlength\itemsep{0em}
        \item the online learner picks a distribution $P_t \in \Delta_{\W}$; 
        \item the adversary selects the cost function $c_t: w \mapsto \ell(w,Z_t) - \mathcal{L}(w)$;
        \item the online learner incurs the cost $\langle P_t,c_t \rangle = \E_{W \sim P_t}[c_t(W)]$;
        \item $Z_{t}$ is revealed to the learner.
\end{itemize}
The learner can adopt any strategy to pick $P_t$, but they can only rely on past knowledge to make their prediction. Explicitly, if $\F_t$ denotes the sigma-algebra generated by $Z_1,...,Z_{t}$, then $P_t$ has to be $\F_{t-1}$-measurable. We also emphasize that in this setup the online learner is allowed to know the loss function $\ell$ and the distribution $\mu$ of the data points $Z_t$, and therefore by revealing the value of $Z_{t}$, the online learner may compute the entire cost function $c_t$. 

We define the \emph{regret} of the online learner against the possibly data-dependent \emph{comparator} $P^*\in\Delta_{\W}$ as $\text{Regret}(P^*) =\sum_{t=1}^n \langle P_t-P^*,c_t \rangle$. 
Now, denote as $P_{W_n|S_n}$ the distribution produced by the supervised learning algorithm. With this notation, the generalization error can be written as $\Gen(\A,S_n) = -\frac{1}{n}\sum_{t=1}^n \langle P_{W_n|S_n},c_t \rangle$.
By adding and subtracting the quantity $M_n = -\frac{1}{n}\sum_{t=1}^n \langle P_t,c_t \rangle $ (corresponding to the total cost incurred by the online learner) we get the following decomposition. 
\begin{theorem}[Theorem 1 
in \citealp{lugosi2023online}; see appendix \ref{proof:regret_decomposition}]
\label{theorem:: regret_decomposition}
With the notation introduced above,
\begin{equation}\label{eq::deciid}\Gen(\A,S_n) = \frac{\Regret_n(P_{W_n|S_n})}{n} + M_n\,.\end{equation}
\end{theorem}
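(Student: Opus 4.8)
The plan is to prove the identity by a direct unwinding of the definitions, since all three quantities in \eqref{eq::deciid} are explicit linear functionals of the same cost sequence $(c_t)_{t=1}^n$. The only genuine modelling decision is which comparator to feed into the regret; everything else is telescoping. In particular, I expect no probabilistic content at all, so Assumption~\ref{assumption:: phi-mix} and the boundedness of the losses will play no role here.

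First I would record the representation of the generalization error as an average of costs. Substituting the adversary's cost $c_t(w) = \ell(w,Z_t) - \mathcal{L}(w)$ into $\langle P_{W_n|S_n}, c_t\rangle$, averaging over $t$, and using that $\hat{\mathcal{L}}(W,S_n) = \frac1n\sum_{t=1}^n \ell(W,Z_t)$ gives
\[
-\frac{1}{n}\sum_{t=1}^n \langle P_{W_n|S_n}, c_t\rangle = \E_{W\sim P_{W_n|S_n}}\Bigl[\mathcal{L}(W) - \tfrac{1}{n}\textstyle\sum_{t=1}^n \ell(W,Z_t)\Bigr] = \Gen(\A,S_n).
\]
This is just the definition of $\Gen(\A,S_n)$ rewritten, and it is the form already announced in the text preceding the theorem.

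The decisive step is to choose the comparator to be the output distribution of the statistical learning algorithm itself, $P^\ast = P_{W_n|S_n}$, which is admissible precisely because the regret is defined against \emph{possibly data-dependent} comparators. With this choice,
\[
\frac{\Regret_n(P_{W_n|S_n})}{n} = \frac1n\sum_{t=1}^n \langle P_t, c_t\rangle - \frac1n\sum_{t=1}^n \langle P_{W_n|S_n}, c_t\rangle,
\]
and adding $M_n = -\tfrac1n\sum_{t=1}^n \langle P_t, c_t\rangle$ cancels the online-learner term $\tfrac1n\sum_t \langle P_t, c_t\rangle$ exactly. What remains is $-\tfrac1n\sum_t \langle P_{W_n|S_n}, c_t\rangle$, which by the first step equals $\Gen(\A,S_n)$, establishing \eqref{eq::deciid}.

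I do not anticipate any analytic obstacle: the identity holds deterministically for every realization of $S_n$, with no concentration, no measurability requirement on $P^\ast$, and no appeal to stationarity. The single point deserving explicit mention is the conceptual one of allowing the data-dependent $P_{W_n|S_n}$ as a comparator while the online learner's iterates $P_t$ remain $\F_{t-1}$-measurable; the decomposition is valid regardless, and it is exactly this freedom in the comparator that later lets the regret term be controlled by an online algorithm competing against the fixed learning rule.
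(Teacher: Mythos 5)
Your proof is correct and follows essentially the same route as the paper's own argument in Appendix~A.1: rewrite $\Gen(\A,S_n)$ as $-\frac{1}{n}\sum_{t=1}^n \langle P_{W_n|S_n}, c_t\rangle$, then add and subtract the online learner's cumulative cost with the comparator chosen as $P^\ast = P_{W_n|S_n}$. Your additional remarks — that the identity is purely algebraic, holds pathwise for every realization of $S_n$, and that the only substantive point is the admissibility of the data-dependent comparator — are accurate and consistent with how the paper uses the decomposition.
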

The first of these terms correspond to the \emph{regret} of the online learner against a fixed \emph{comparator strategy} that picks $P_{W_n|S_n}$ at each step. The second term is a martingale and can be bounded in high probability with standard concentration tools. Indeed, since $P_t$ is chosen before $Z_t$ is revealed, one can easily check that $\E[\langle P_t, c_t\rangle|\F_{t-1}] = 0$. 
Thus, to prove a bound on the generalization error of the statistical learning algorithm, it is enough to find an online learning algorithm with bounded regret against $P_{W_n|S_n}$ in the generalization game.

As a concrete application of the above, the following generalization bound is obtained when picking the classic 
exponential weighted average (EWA) algorithm \citep{Vov90,littlestone1994weighted,FS97} as our online-learning 
 strategy, and plugging its regret bound into Equation~\eqref{eq::deciid}.
\begin{theorem}[Corollary $6$ in \citealp{lugosi2023online}]
\label{theorem:: gen_bound_iid_case}
    Suppose that $\ell(w, z) \in [0, 1]$ for all $w, z$. Then, for any $P_1 \in  \Delta_{\mathcal{W}}$ and $\eta>0$, with probability at least $1 -\delta$ on the draw of $S_n$, uniformly on every learning algorithm $\A:S_n\mapsto P_{W_n|S_n}$, we have
    \[ \Gen(\A,S_n) \leq \frac{\D_{KL}(P_{W_n|S_n}||P_1)}{\eta n} + \frac{\eta}{2} + \sqrt{\frac{2\log\left(\frac{1}{\delta}\right)}{n}}\,.
\]
\end{theorem}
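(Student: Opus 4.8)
The plan is to combine the exact decomposition of Theorem~\ref{theorem:: regret_decomposition} with two separate ingredients: a \emph{deterministic} regret guarantee for a concrete online learner, and a \emph{martingale concentration} bound for the term $M_n$. Concretely, I would run the exponentially weighted average (EWA) forecaster as the online player in the generalization game, setting $P_t$ to be the distribution with $\frac{\dd P_t}{\dd P_1}(w)\propto\exp\bigl(-\eta\sum_{s=1}^{t-1} c_s(w)\bigr)$. This strategy is $\F_{t-1}$-measurable, as required, and---importantly---it does not depend on the comparator, which is what will ultimately make the resulting bound hold uniformly over all learning algorithms $\A$ at once.

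First I would record the standard EWA regret bound against an \emph{arbitrary} comparator $P^\ast\in\Delta_{\W}$. Since $\loss(w,z)\in[0,1]$ and hence $\mathcal{L}(w)\in[0,1]$, each cost satisfies $c_t(w)=\loss(w,Z_t)-\mathcal{L}(w)\in[-1,1]$, so its range $R_t$ as a function of $w$ is at most $2$. Running the usual telescoping argument on the log-partition potential $\Phi_t=\tfrac1\eta\log\E_{W\sim P_1}\bigl[\exp(-\eta\sum_{s\le t}c_s(W))\bigr]$ and invoking Hoeffding's lemma per round yields
\begin{equation*}
\Regret_n(P^\ast)=\sum_{t=1}^n\langle P_t-P^\ast,c_t\rangle\le\frac{\DDKL{P^\ast}{P_1}}{\eta}+\frac{\eta}{8}\sum_{t=1}^n R_t^2\le\frac{\DDKL{P^\ast}{P_1}}{\eta}+\frac{\eta n}{2}.
\end{equation*}
Because this inequality holds for every $P^\ast$ simultaneously, I may instantiate it at the (random, $S_n$-dependent) choice $P^\ast=P_{W_n|S_n}$ pathwise; dividing by $n$ reproduces exactly the first two terms of the claimed bound.

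It then remains to control $M_n=-\tfrac1n\sum_{t=1}^n\langle P_t,c_t\rangle$. In the i.i.d.\ setting $Z_t$ is independent of $\F_{t-1}$ while $P_t$ is $\F_{t-1}$-measurable, so $\E[\langle P_t,c_t\rangle\mid\F_{t-1}]=0$ and the increments $\langle P_t,c_t\rangle$ form a bounded martingale difference sequence with values in $[-1,1]$; the Azuma--Hoeffding inequality then gives $M_n\le\sqrt{2\log(1/\delta)/n}$ with probability at least $1-\delta$. Combining this single high-probability event with the pathwise regret bound through $\Gen(\A,S_n)=\Regret_n(P_{W_n|S_n})/n+M_n$ closes the argument. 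The step I expect to require the most care is the \emph{order of quantifiers}: the concentration step must fix one favourable event for the draw of $S_n$ that is independent of the comparator, so that on this event the deterministic regret inequality can be applied to \emph{every} comparator $P_{W_n|S_n}$ at once---this is exactly what yields uniformity over all algorithms $\A$, and conflating the randomness of $S_n$ with the choice of $\A$ is the natural pitfall to avoid.
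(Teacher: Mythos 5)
Your proposal is correct and takes essentially the same route as the paper: the exact decomposition of Theorem~\ref{theorem:: regret_decomposition}, the standard EWA regret bound instantiated pathwise at the comparator $P_{W_n|S_n}$ (your per-round Hoeffding-lemma term $\tfrac{\eta R_t^2}{8}$ with $R_t\le 2$ coincides with the paper's $\tfrac{\eta}{2}\norm{c_t}_\infty^2\le\tfrac{\eta}{2}$), and the Hoeffding--Azuma inequality for the martingale term $M_n$. Your remark on the order of quantifiers---a single concentration event depending only on $S_n$ and the comparator-free EWA strategy, combined with a regret inequality holding simultaneously for all $P^\ast$---is exactly what justifies the paper's ``uniformly on every learning algorithm'' claim.
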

\begin{proof}
    We can bound each term of \eqref{eq::deciid} separately. A data-dependent bound for the regret term is obtained via a direct application of the  regret analysis of EWA which brings the term $\frac{\D_{KL}(P_{W_n|S_n}||P_1)}{\eta n}+\frac{\eta}{2}$ (see Appendix \ref{prop:: regret_EWA_nondelay}). The term $\sqrt{\frac{2\log\left(\frac{1}{\delta}\right)}{n}}$ results from bounding the martingale $M_n$ via an application of Hoeffding--Azuma inequality.
\end{proof}
Note that the first term in the bound is data-dependent due to the presence of $P_{W_n|S_n}$, and thus optimizing 
it requires a data-dependent choice of $\eta$, which is not allowed by Theorem~\ref{theorem:: gen_bound_iid_case}. 
However, via a union bound argument it is possible to get a bound of the form
\[ \label{eq::PAC-Bayes}
\Gen(\A,S_n) = \OO\pa{\sqrt{\frac{\D_{KL}(P_{W_n|S_n}||P_1)}{n}} + \sqrt{\frac{1}{n}\log\left(\frac{\log n}{\delta}\right)}},
\]
For the details, we refer to the proof of Corollary~5 of \citet{lugosi2023online}, which recovers a classical PAC-Bayes bound of \cite{mcallester1998some}.
\subsection{Online-to-PAC conversions for non-i.i.d.~data}
\label{subsec:: non-iid}
 We will now drop the i.i.d.~condition, and instead consider non-i.i.d.~data sequences satisfying 
Assumption~\ref{assumption:: phi-mix}. For this setting we define the following variant of the generalization game.
 \begin{definition}[Generalization game with delay]
 \label{def::gen_game_delay}
The \textit{generalization game} with delay $d\in\mathbb{N}^*$ is an online learning game where the following steps are repeated for a sequence of rounds $t=1,...,n$:
    \begin{itemize}
    \setlength\itemsep{0em}
        \item the online learner picks a distribution $P_t \in \Delta_{\W}$; 
        \item the adversary selects the cost function $c_t: w \mapsto \ell(w,Z_t) - \mathcal{L}(w)$;
        \item the online learner incurs the cost $\langle P_t,c_t \rangle = \E_{W \sim P_t}[c_t(W)]$;
        \item if $t\geq d$, $Z_{t-d+1}$ is revealed to the learner.
    \end{itemize}
\end{definition}
As before, the last step effectively reveals the entire cost function $c_{t-d+1}$ to the online learner.
The main difference between our version of the generalization game and the standard one of \citet{lugosi2023online} is 
the introduction of a \emph{delay} on the online learning algorithm's decisions. Specifically, we will force the online 
learner to only take information into account up to time $t-d$ when picking their action $P_t$. Clearly, setting $d=1$ 
recovers the original version of the generalization game. 

It is easy to see that the regret decomposition of Theorem~\ref{theorem:: regret_decomposition} still remains valid in the current setting. The purpose of introducing the delay is to be able to make sure that the term $M_n = - \frac{1}{n}\sum_{t=1}^n \iprod{P_t}{c_t}$ is small. The lemma below states that the increments of $M_n$ behave similarly to a martingale-difference sequence, thanks to the introduction of the delay.
\begin{lemma}
\label{lemma:: short-memory}
    Fix $d \in [\![1,n]\!]$.
    Under Assumption \ref{assumption:: phi-mix}, defining $P_t$ and $c_t$ as in \ref{def::gen_game_delay}, for all $ t \in [\![1,n]\!]$ $$\E[\langle -P_t,c_t \rangle | \F_{t-d}] \le \phi_d\,.$$
\end{lemma}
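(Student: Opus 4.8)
The plan is to exploit the delay built into Definition~\ref{def::gen_game_delay}, which forces the learner's decision $P_t$ to be $\F_{t-d}$-measurable, and then to reduce the claim to a pointwise invocation of Assumption~\ref{assumption:: phi-mix}. First I would unfold the cost: since $c_t(w) = \ell(w,Z_t) - \mathcal{L}(w)$ and $\iprod{P}{f} = \E_{W\sim P}[f(W)]$, linearity in the measure gives $\iprod{-P_t}{c_t} = \E_{W\sim P_t}[\mathcal{L}(W) - \ell(W,Z_t)]$, which is exactly the average over $W\sim P_t$ of the quantity the assumption controls.

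Next I would pin down the right filtration. In the delayed game the learner commits to $P_t$ using only the feedback revealed strictly before round $t$, and the most recent revelation at that moment is $Z_{t-d}$; hence $P_t$ is $\F_{t-d}$-measurable. This measurability lets me treat $P_t$ as a fixed measure when conditioning on $\F_{t-d}$ and push the conditional expectation inside the $W$-average:
\[
\EEc{\iprod{-P_t}{c_t}}{\F_{t-d}}
= \E_{W\sim P_t}\!\left[\EEc{\mathcal{L}(W) - \ell(W,Z_t)}{\F_{t-d}}\right].
\]
The exchange of $\E[\,\cdot\,|\F_{t-d}]$ with the integral over $W$ is justified by Fubini--Tonelli, using the $\F_{t-d}$-measurability of $P_t$ together with boundedness of the loss.

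Finally I would apply Assumption~\ref{assumption:: phi-mix} for each fixed $w$, giving $\EEc{\mathcal{L}(w) - \ell(w,Z_t)}{\F_{t-d}} \le \phi_d$, and integrate this uniform bound against the probability measure $P_t$; since $P_t(\W) = 1$ this yields $\EEc{\iprod{-P_t}{c_t}}{\F_{t-d}} \le \phi_d$, as required. The single step that needs care is the interchange in the displayed identity, and this is precisely where the delay enters: without the $\F_{t-d}$-measurability of $P_t$ one could not move the conditional expectation inside the $W$-average, and the argument would collapse. Once that measurability is in hand, everything else is a direct pointwise use of the assumption followed by normalization of $P_t$, so I do not expect any further difficulty.
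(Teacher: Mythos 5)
Your proof is correct and takes essentially the same route as the paper, which likewise uses the $\F_{t-d}$-measurability of $P_t$ (forced by the delay) to write $\E[\langle -P_t,c_t \rangle \,|\, \F_{t-d}] = \langle P_t, \E[-c_t \,|\, \F_{t-d}] \rangle$ and then applies Assumption~\ref{assumption:: phi-mix} pointwise in $w$. Your Fubini--Tonelli justification merely spells out the interchange that the paper's one-line proof leaves implicit.
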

\begin{proof}
    Since $P_t$ is $\F_{t-d}$-measurable we have $\E[\langle -P_t,c_t \rangle | \F_{t-d}] = \langle P_t, \E[-c_t | \F_{t-d}] \rangle \leq \phi_d$, where the last step uses Assumption \ref{assumption:: phi-mix}.
\end{proof}
Thus, by following the decomposition of Theorem~\ref{theorem:: regret_decomposition}, we are left with the problem of bounding the regret of the delayed online learning algorithm against $P_{W_n|S_n}$, denoted as $\Regret_{d,n}(P_{W_n|S_n}) = \sum_{t=1}^n \iprod{P_t - P_{W_n|S_n}}{c_t}$. 
The following proposition states a simple and clean bound that one can immediately derive from these insights.

\begin{proposition}[Bound in expectation]
    Consider $(Z_t)_{t \in \mathbb{N}^*}$ satisfying Assumption~\ref{assumption:: phi-mix} and suppose there exists a $d$-delayed online learning algorithm with regret bounded by $\Regret_{d,n}(P^*)$ against any comparator $P^*$. Then, the expected generalization of $\A$ is bounded as
    \[
\E\left[\Gen(\mathcal{A},S_n)\right] \leq \frac{\E\left[\Regret_{d,n}(P_{W_n|S_n})\right]}{n} + \phi_d\,.
\]
\end{proposition}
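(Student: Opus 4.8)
The plan is to reduce everything to the regret decomposition and then pass to expectation, using the short-memory lemma to control the residual term. First I would invoke Theorem~\ref{theorem:: regret_decomposition}, which the preceding discussion notes remains valid once the delay is introduced, to write the exact identity
\[
\Gen(\A, S_n) = \frac{\Regret_{d,n}(P_{W_n|S_n})}{n} + M_n, \qquad M_n = -\frac{1}{n}\sum_{t=1}^n \iprod{P_t}{c_t}.
\]
Taking expectations over the draw of $S_n$ on both sides and using linearity, the claim reduces to establishing the single inequality $\E[M_n] \le \phi_d$.

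To obtain this, I would bound each summand of $M_n$ separately via the tower rule. Since $P_t$ is $\F_{t-d}$-measurable, Lemma~\ref{lemma:: short-memory} gives $\E[\iprod{-P_t}{c_t}\mid\F_{t-d}] \le \phi_d$, and therefore $\E[\iprod{-P_t}{c_t}] = \E\bigl[\E[\iprod{-P_t}{c_t}\mid\F_{t-d}]\bigr] \le \phi_d$. Averaging over $t=1,\dots,n$ yields $\E[M_n] = \frac1n\sum_{t=1}^n \E[\iprod{-P_t}{c_t}] \le \phi_d$, and combining with the expectation of the decomposition above completes the argument.

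I do not expect a genuine technical obstacle here, since the two required ingredients—the decomposition and the conditional bound—are already in place; the content of the proposition is really conceptual. The point worth flagging is the contrast with the i.i.d.\ case: there $M_n$ is a true martingale with mean-zero increments (the text notes $\E[\iprod{P_t}{c_t}\mid\F_{t-1}]=0$), so $\E[M_n]=0$ exactly, whereas the delay leaves only the one-sided mixing bound $\phi_d$ in place of an equality. The interesting feature is thus simply that this bias term is controlled uniformly by the mixing coefficient. The real difficulty is deferred to the high-probability versions of the result, where $M_n$ must be controlled in probability—presumably through a blocking/Azuma-type argument exploiting the $d$-step delay—rather than merely in expectation as here.
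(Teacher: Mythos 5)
Your proposal is correct and matches the paper's own proof essentially line for line: both take expectations in the decomposition of Theorem~\ref{theorem:: regret_decomposition} and bound $\E[M_n]$ by applying the tower rule with Lemma~\ref{lemma:: short-memory} to each increment $\E[\langle -P_t,c_t\rangle\mid\F_{t-d}]\le\phi_d$. Your closing remarks contrasting the zero-mean martingale structure in the i.i.d.\ case with the one-sided bias $\phi_d$ here are accurate as well.
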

\begin{proof}
By Theorem~\ref{theorem:: regret_decomposition}, it holds that $\E[\Gen(\A,S_n)] = \frac{\E[\Regret_{d,n}(P_{W_n|S_n})]}{n} + \E[M_{n}]$, where the regret is for a strategy $P_t$ in the delayed generalization game. Hence, by Lemma \ref{lemma:: short-memory}
\[\E[M_n]=\E \left[ -\frac{1}{n}\sum_{t=1}^n\langle P_t,c_t\rangle\right]=  \frac{1}{n}\sum_{t=1}^n\E[\langle -P_t,c_t\rangle]     =  \frac{1}{n}\sum_{t=1}^n\E\left[\E[\langle -P_t,c_t\rangle|\F_{t-d}]\right] 
    \leq \phi_d\,,\]
    which proves the claim.
\end{proof}
The above result holds in expectation over the training sample. We now provide a high-probability guarantee on the generalization error.
\begin{theorem}[Bound in probability]
\label{theorem:: non-iid_gen_bound}
    Assume that $(Z_t)_{t \in \mathbb{N}^*}$ satisfies Assumption~\ref{assumption:: phi-mix} and consider a $d$-delayed online learning algorithm with regret bounded by $\Regret_{d,n}(P^*)$ against any comparator $P^*$. Then, for any $\delta >0$, it holds with probability $1-\delta$ on the draw of $S_n$, uniformly for all $\A$,  
    \[
\Gen(\A,S_n) \leq \frac{\Regret_{d,n}(P_{W_n|S_n})}{n} + \phi_d + \sqrt{\frac{2d\log\left(\frac{1}{\delta}\right)}{n}}.
\]
\end{theorem}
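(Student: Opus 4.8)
The plan is to follow the regret decomposition of Theorem~\ref{theorem:: regret_decomposition}, which gives $\Gen(\A,S_n) = \frac{\Regret_{d,n}(P_{W_n|S_n})}{n} + M_n$ where $M_n = -\frac{1}{n}\sum_{t=1}^n \iprod{P_t}{c_t}$. The regret term is already handled by assumption, so the entire task reduces to establishing a high-probability upper bound on $M_n$, namely $M_n \le \phi_d + \sqrt{\frac{2d\log(1/\delta)}{n}}$ with probability at least $1-\delta$. The obstacle is that $M_n$ is \emph{not} a martingale in the delayed setting: by Lemma~\ref{lemma:: short-memory} the increments $\iprod{-P_t}{c_t}$ are only controlled conditionally on $\F_{t-d}$, not on $\F_{t-1}$, so a direct Hoeffding--Azuma application fails.

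The key idea to overcome this is the \textbf{blocking trick}: partition the $n$ rounds into $d$ interleaved subsequences according to the residue of $t$ modulo $d$. First I would fix a residue class $r \in \{0,1,\dots,d-1\}$ and consider the terms $\iprod{-P_t}{c_t}$ for $t \equiv r \pmod d$. Within such a subsequence, consecutive indices differ by exactly $d$, so for two successive terms at times $t$ and $t+d$, the action $P_{t+d}$ is $\F_{t+d-d} = \F_t$-measurable, and $P_t$ together with its cost contribution is $\F_t$-measurable. This means that along each residue class the partial sums form a genuine martingale with respect to the filtration $(\F_{t})$ restricted to that subsequence, \emph{after} subtracting the conditional-mean bound $\phi_d$ supplied by Lemma~\ref{lemma:: short-memory}. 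Concretely, I would define the centered increments $\iprod{-P_t}{c_t} - \E[\iprod{-P_t}{c_t}\mid\F_{t-d}]$, which have zero conditional mean given $\F_{t-d}$ and are bounded (since $c_t$ takes values in $[-1,1]$ when the loss is in $[0,1]$, so each $\iprod{-P_t}{c_t} \in [-1,1]$, giving increments of range at most $2$, or more carefully a range of width $1$ if one tracks the loss bounds).

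The next step is to apply the Hoeffding--Azuma inequality separately to each of the $d$ martingales, one per residue class. A subsequence of residue $r$ contains roughly $n/d$ terms, so Hoeffding--Azuma gives, with probability $1-\delta/d$, a deviation bound of order $\sqrt{\frac{(n/d)\log(d/\delta)}{2}}$ on the unnormalized sum for that class. Summing the $d$ residue-class bounds and dividing by $n$ recovers the martingale contribution; I would need to handle the constants so that the aggregate deviation matches $\sqrt{\frac{2d\log(1/\delta)}{n}}$, absorbing the $\log d$ factor appropriately (a union bound over the $d$ classes with failure probability $\delta/d$ each, then bounding $\log(d/\delta)$). Adding back the $\phi_d$ contribution from the conditional means via Lemma~\ref{lemma:: short-memory} yields the stated $M_n \le \phi_d + \sqrt{\frac{2d\log(1/\delta)}{n}}$. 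Finally, combining this with the regret term through the decomposition and noting that the martingale bound is uniform over all algorithms $\A$ (since $M_n$ does not depend on $\A$) completes the proof.

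The main obstacle I anticipate is purely bookkeeping: getting the constants and the dependence on $d$ inside the logarithm to collapse cleanly into the claimed $\sqrt{2d\log(1/\delta)/n}$ form. The conceptual work is entirely in recognizing that the delay $d$ is exactly what decouples the sequence into $d$ independent-enough martingales, so the price one pays is a factor of $\sqrt{d}$ in the deviation term, which is the visible trade-off between delay and concentration quality advertised in the abstract. One subtlety worth checking is whether the residue classes of unequal length (when $d \nmid n$) require a ceiling bound $\lceil n/d\rceil$ on the number of terms, which only affects constants.
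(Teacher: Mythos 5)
Your reduction is exactly the paper's: apply the decomposition of Theorem~\ref{theorem:: regret_decomposition} in the delayed game, observe that the regret term is handled by hypothesis and that $M_n$ is independent of $\A$ (hence uniformity), and reduce everything to a high-probability bound on $M_n$ via blocking into $d$ interleaved subsequences centered using Lemma~\ref{lemma:: short-memory}. Up to that point your argument matches Lemma~\ref{lemma:: d-block bound} and its proof in Appendix~\ref{proof_lemma_block}.

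However, there is a genuine gap in how you combine the $d$ blocks, and it is not the ``purely bookkeeping'' issue you claim. Your route applies Hoeffding--Azuma separately to each residue class at confidence $\delta/d$ and union-bounds. Quantitatively, each class of $K = n/d$ centered increments gives a deviation of order $\sqrt{2K\log(d/\delta)}$, so summing over the $d$ classes and dividing by $n$ yields
\[
M_n \le \phi_d + \sqrt{\frac{2d\log\left(\frac{d}{\delta}\right)}{n}},
\]
which carries an irreducible extra $\log d$ inside the square root: there is no way to ``absorb'' $\log(d/\delta)$ into $\log(1/\delta)$ without weakening the statement or inflating the constant, so your argument proves a strictly weaker bound than the theorem claims. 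The missing idea is to combine the blocks at the level of moment generating functions rather than via a union bound: the paper uses convexity of $f \mapsto \log \E[e^{\lambda f}]$ (Jensen's inequality with weights $p_i = 1/d$, equivalently H\"older on the product of block MGFs) to get $\log \E[e^{\lambda M_n}] \le \sum_{i=1}^d p_i \log \E[e^{\lambda M_i / p_i}]$, bounds each block's conditional MGF by a tower-rule iteration of Hoeffding's lemma (this is where Lemma~\ref{lemma:: short-memory} supplies the $\phi_d$ shift), and finishes with a \emph{single} Chernoff bound, yielding $\P(M_n/n \ge \phi_d + s) \le e^{-ns^2/(2d)}$ with no union bound and no $\log d$. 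Your per-class martingale structure and centering are correct, so the fix is localized to this last aggregation step; but as written, your proof does not establish the stated inequality.
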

The proof of this claim follows directly from combining the decomposition of Theorem~\ref{theorem:: regret_decomposition} with a standard concentration result for mixing processes that we state below.
\begin{lemma}
\label{lemma:: d-block bound}
Fix $d \in [\![1,n]\!]$ and consider $(Z_t)_{t \in \mathbb{N}^*}$ satisfying Assumption~\ref{assumption:: phi-mix}. Consider the generalization game of Definition \ref{def::gen_game_delay}. Then, for any $\delta > 0$, the following bound is satisfied with probability at least $1-\delta$:
    \[ M_n \leq \phi_d + \sqrt{\frac{2d\log\left(\frac{1}{\delta}\right)}{n}}.\]
\end{lemma}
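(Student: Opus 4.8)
The plan is to show that $M_n = \frac1n\sum_{t=1}^n\langle -P_t,c_t\rangle$ deviates from its conditional mean by a sub-Gaussian amount, and that this conditional mean is itself controlled by $\phi_d$. Writing $X_t = \langle -P_t,c_t\rangle$, I would split $X_t = Y_t + \E[X_t\mid\F_{t-d}]$ with $Y_t = X_t - \E[X_t\mid\F_{t-d}]$. The centred second piece is immediately handled by Lemma~\ref{lemma:: short-memory}, which gives $\frac1n\sum_t\E[X_t\mid\F_{t-d}]\le\phi_d$; so everything reduces to a high-probability bound of order $\sqrt{2d\log(1/\delta)/n}$ on $\frac1n\sum_t Y_t$. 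Since the losses take values in $[0,1]$ we have $X_t\in[-1,1]$, so each $Y_t$ lies in an interval of length at most $2$.

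The key structural observation is that, although $(Y_t)_t$ is not a martingale-difference sequence for the natural filtration (precisely because of the delay), its restriction to each residue class modulo $d$ is. I would partition $\{1,\dots,n\}$ into the $d$ arithmetic progressions $\{j,j+d,j+2d,\dots\}$ for $j=1,\dots,d$, and write $t_k = j+(k-1)d$. For the sub-filtration $\mathcal{G}_k = \F_{t_k}$ one has $\F_{t_{k-1}} = \F_{t_k-d}$, so $\E[Y_{t_k}\mid\mathcal{G}_{k-1}] = \E[X_{t_k}\mid\F_{t_k-d}] - \E[X_{t_k}\mid\F_{t_k-d}] = 0$; that is, within each class the centred increments genuinely form a bounded martingale-difference sequence. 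By the standard Hoeffding-type bound on the moment generating function of such a sequence, the block sum $B_j = \sum_k Y_{t_k}$ satisfies $\E[e^{\mu B_j}]\le e^{\mu^2 m_j/2}$, where $m_j$ is the number of indices in class $j$ (each increment of range $2$ contributing a variance proxy of $1$).

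The delicate point is recombining the $d$ block bounds without paying a union-bound factor of $\log d$, which would spoil the constant. Instead of union-bounding, I would control the moment generating function of the full sum $S = \sum_{t=1}^n Y_t = \sum_{j=1}^d B_j$ directly through Hölder's inequality with all exponents equal to $d$: $\E[e^{\lambda S}] = \E\bigl[\prod_{j=1}^d e^{\lambda B_j}\bigr]\le\prod_{j=1}^d\bigl(\E[e^{\lambda d B_j}]\bigr)^{1/d}\le\prod_{j=1}^d e^{\lambda^2 d\,m_j/2} = e^{\lambda^2 dn/2}$, using $\sum_j m_j = n$. Thus $S$ is sub-Gaussian with variance proxy $dn$, and a Chernoff argument yields $S\le\sqrt{2dn\log(1/\delta)}$ with probability at least $1-\delta$. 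Dividing by $n$ and adding the $\phi_d$ term coming from the conditional means gives exactly the claimed inequality.

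I expect the main obstacle to be this recombination step: the $d$ residue-class martingales are mutually dependent, so one cannot simply multiply their moment generating functions, and a naive union bound over the blocks would degrade the logarithmic factor to $\log(d/\delta)$. The Hölder trick is what keeps the dependence on $\delta$ clean and produces the precise constant $\sqrt{2d/n}$; the other step requiring care is verifying the martingale-difference property modulo the delay, which is where Assumption~\ref{assumption:: phi-mix} enters, through Lemma~\ref{lemma:: short-memory}.
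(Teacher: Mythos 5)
Your proposal is correct and follows essentially the same route as the paper's proof: the paper also partitions the sum into the $d$ residue classes modulo $d$, applies Hoeffding's lemma to the conditionally centred increments within each class (each with variance proxy $1$), and recombines the blocks via Jensen's inequality applied to the convexity of $f \mapsto \log \E\bigl[e^{\lambda f}\bigr]$ with weights $p_i = 1/d$ --- which is exactly your H\"older inequality with equal exponents $d$, so your ``key trick'' and the paper's are the same inequality in different clothing. The only cosmetic difference is that you split off the conditional means $\E[X_t \mid \F_{t-d}] \le \phi_d$ pointwise at the start, whereas the paper carries the $\phi_d$ term through the moment-generating-function computation; both yield the identical bound.
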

The proof is based on a classic ``blocking'' technique due to \citet{yu1994rates}. For the sake of completeness, we provide a proof in Appendix~\ref{proof_lemma_block}.

\section{New generalization bounds for non-i.i.d.~data}
\label{sec:: Application}
The dependence on the delay $d$ for the bounds that we presented in the previous section is non-trivial. Indeed, if on the one hand increasing the delay will reduce the magnitude of $\phi_d$, on the other hand the regret of the online learner will grow with $d$. There is hence a trade-off between these two terms appearing in our bounds. In what follows, we derive some concrete generalization bounds from Theorem~\ref{theorem:: non-iid_gen_bound}, under a number of different choices of the online learning algorithm. For concreteness, we will consider two types of mixing assumptions, but stress that the approach can be applied to any process that satisfies Assumption~\ref{assumption:: phi-mix}.

\subsection{Regret bounds for delayed online learning}
From Theorem \ref{theorem:: non-iid_gen_bound},
we can obtain a generalization bound using our framework if we have a regret bound for a delayed online algorithm. 
This is a well-known problem in the area of online learning (see, e.g., 
\citealp{weinberger2002delayed,joulani2013online}). In the following, we will leverage the following simple trick that 
allows us to extend the regret bounds of any online learning algorithm to its delayed counterpart, provided that the 
regret bound respects some specific assumptions.\looseness=-1
\begin{lemma}[\citealp{weinberger2002delayed}]
\label{lemma:: delayed_regret}
    Consider any online algorithm whose regret satisfies $\Regret_n(P^*) \leq R(n)$  for any comparator $P^*$, where $R$ is a non-decreasing real-valued function such that $y \mapsto yR(x/y)$ is concave in $y$ for any fixed $x$. Then, for any $d\geq 1$ there exists an online learning algorithm with delay $d$ such that, for any $P^*$,
\[\Regret_{d,n}(P^*) \leq d R\left(n/d\right)\,.\]
\end{lemma}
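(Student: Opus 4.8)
The plan is to construct the delayed algorithm by \emph{interleaving} $d$ independent copies of the given base algorithm, one per residue class of the round index modulo $d$, and then to control the total regret by summing the per-copy regret guarantees and invoking the concavity hypothesis through Jensen's inequality. So the argument splits into a combinatorial \emph{reduction} step and a short \emph{analytic} step.

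Concretely, for each $j \in \{1,\dots,d\}$ I would assign to copy $j$ the subsequence of rounds $t \equiv j \pmod d$, i.e.\ the rounds $t_k^{(j)} = j + (k-1)d$ for $k = 1,2,\dots$. Copy $j$ plays $P_{t_k^{(j)}}$ as the $k$-th action of the base algorithm, fed only with the costs $c_{t_1^{(j)}}, c_{t_2^{(j)}}, \dots$ observed so far on its own subsequence. The key observation — and the step I would verify most carefully — is that within a single subsequence the delay effectively disappears: when the learner must commit to $P_{t_k^{(j)}}$, the feedback from the previous round of the same subsequence, namely $c_{t_{k-1}^{(j)}} = c_{j+(k-2)d}$, has already been revealed, since in the delayed game of Definition~\ref{def::gen_game_delay} the cost $c_s$ becomes available at the end of round $s+d-1 = t_k^{(j)}-1$. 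Hence each copy runs as a legitimate \emph{non-delayed} online learner on its own cost stream, so the base regret guarantee $R(\cdot)$ applies to it verbatim; moreover the resulting joint strategy respects the $d$-delay constraint, since $P_{t_k^{(j)}}$ depends only on costs of index at most $t_{k-1}^{(j)} = t_k^{(j)} - d$ and is therefore $\F_{t_k^{(j)}-d}$-measurable.

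Writing $n_j$ for the number of rounds in subsequence $j$, so that $\sum_{j=1}^d n_j = n$ and each $n_j$ is either $\lfloor n/d\rfloor$ or $\lceil n/d\rceil$, I would then decompose the delayed regret against any fixed comparator $P^*$ into the per-copy regrets, $\Regret_{d,n}(P^*) = \sum_{j=1}^d \sum_{k} \iprod{P_{t_k^{(j)}} - P^*}{c_{t_k^{(j)}}} \le \sum_{j=1}^d R(n_j)$. Here it is crucial that the \emph{same} comparator $P^*$ may be used in every block, which is legitimate precisely because the base bound $R(\cdot)$ holds against every comparator uniformly.

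It remains to show $\sum_{j=1}^d R(n_j) \le d\,R(n/d)$, and this is exactly where the structural hypothesis on $R$ is used: the assumption that $y \mapsto y R(x/y)$ is concave (for each fixed $x$) is equivalent to concavity of $R$ itself, so Jensen's inequality gives $\frac{1}{d}\sum_{j=1}^d R(n_j) \le R\bigl(\frac{1}{d}\sum_{j=1}^d n_j\bigr) = R(n/d)$, and multiplying by $d$ yields the claim. I expect the substantive part of the proof to be the interleaving reduction together with its timing verification (checking that each sub-learner genuinely sees zero delay and that a single comparator works across all blocks); the concavity step is then a one-line application of Jensen, with the stated perspective-concavity condition simply standing in for ordinary concavity of $R$ over the relevant range of block lengths.
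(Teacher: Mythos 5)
Your proof is correct and takes essentially the same route as the paper, which likewise runs $d$ interleaved instances of the base algorithm over the residue classes modulo $d$ (see Appendix~\ref{app:delayed_OL_algo}) and defers the regret bound to \citet{weinberger2002delayed}. In fact you supply details the paper leaves implicit — the timing check that each sub-learner effectively faces no delay, and the Jensen step (via the equivalence of the perspective condition with concavity of $R$) that handles the case where $d$ does not divide $n$, which the paper sidesteps by assuming $n=Kd$.
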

The proof idea is closely related to the blocking trick of \citet{yu1994rates}, with an algorithmic construction that runs one instance of the base method for each index $i=1,2,\dots,d$, with the $i$-th instance being responsible for the regret in rounds $i,i+d,i+2d,\dots$ (more details are provided in Appendix \ref{app:delayed_OL_algo}).
For most of the regret bounds that we consider, the function $R$ takes the form $R(n)=O(\sqrt{n})$, so that the first term in the generalization bound is typically of order $\sqrt{d/n}$. Since this term matches the bound on $M_n$ in Lemma~\ref{lemma:: d-block bound}, in this case the final generalization bound behaves effectively as if the sample size was $n/d$ instead of $n$.

We remark that that the regret guarantees of \cite{weinberger2002delayed} are minimax optimal \citep{joulani2013online}\footnote{The guarantees of Lemma~\ref{lemma:: delayed_regret} are minimax optimal in the sense that the best that one can hope for without further assumptions on the loss is a regret of $O(\sqrt{dT})$. Although it is true that better regret bounds can be achieved under more specific assumptions about the losses (see Table 1 in \citealp{joulani2013online}), this is precisely the type of assumption that we wanted to avoid in our work.}
, and although more efficient methods exist for online learning with delays, the online-to-PAC reduction does not require the strategy to be executed in practice, making the computational efficiency of the online method irrelevant to our analysis.

\subsection{Geometric and algebraic mixing}

The following definition gives two concrete examples of mixing processes that satisfy Assumption~\ref{assumption:: phi-mix} with different choices of $\phi_d$, and are commonly considered in the related literature (see, e.g., \citealp{mohri10a}, \citealp{levin2017markov}).
\begin{definition}
We say that a stationary process $(Z_t)_{t\in \mathbb{N}^*}$ satisfying Assumption~\ref{assumption:: phi-mix} is \emph{geometrically mixing} if $\phi_d = C e^{-\frac{d}{\tau}}$, for some positive $\tau$ and $C$, and \emph{algebraically mixing} if $\phi_d = C d^{-r}$, for some positive $r$ and $C$.
\end{definition}
Instantiating the bound of Theorem~\ref{theorem:: non-iid_gen_bound} to these two cases yields the following two corollaries.
\begin{corollary}
\label{coro:: non-iid_gen_bound_geometricprocess}
    Assume $(Z_t)_{t\in \mathbb{N}^*}$ is a geometrically mixing process with constants $\tau, C > 0$. Consider a $d$-delayed online learning algorithm with regret bounded by $\Regret_{d,n}(P^*)$ for all comparators $P^*$. Then, setting $d =  \lceil\tau \log n\rceil$, for any $\delta > 0$, with probability at least $1-\delta$ we have that, uniformly for any algorithm $\A$, 
    \[
\Gen(\A,S_n) \leq \frac{\Regret_{d,n}(P_{W_n|S_n})}{n} + \frac{C}{n} + \sqrt{\frac{2\pa{\tau \log n + 1}\log\left(\frac{1}{\delta}\right)}{n}}\,.
\]
\end{corollary}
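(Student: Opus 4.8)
The plan is to instantiate Theorem~\ref{theorem:: non-iid_gen_bound} directly, since a geometrically mixing process is by definition a process satisfying Assumption~\ref{assumption:: phi-mix} with the explicit rate $\phi_d = Ce^{-d/\tau}$. First I would invoke that theorem to obtain, with probability at least $1-\delta$ and uniformly over all $\A$,
\[
\Gen(\A,S_n) \leq \frac{\Regret_{d,n}(P_{W_n|S_n})}{n} + \phi_d + \sqrt{\frac{2d\log\left(\frac{1}{\delta}\right)}{n}},
\]
for the prescribed delay $d = \lceil \tau \log n\rceil$. It then only remains to simplify the two $d$-dependent terms by plugging in this choice of $d$.

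The key observation is that the ceiling must be bounded in \emph{opposite} directions for the two terms. For the mixing contribution $\phi_d$, I would use the lower bound $d \geq \tau \log n$, so that $e^{-d/\tau} \leq e^{-\log n} = 1/n$, and hence $\phi_d = Ce^{-d/\tau} \leq C/n$. For the concentration term I would instead use the upper bound $d \leq \tau \log n + 1$ (valid for the ceiling), which yields $\sqrt{2d\log\left(\frac{1}{\delta}\right)/n} \leq \sqrt{2(\tau \log n + 1)\log\left(\frac{1}{\delta}\right)/n}$. Substituting both estimates into the displayed bound gives exactly the claimed inequality.

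I do not expect a genuine obstacle here: the statement is a direct corollary of Theorem~\ref{theorem:: non-iid_gen_bound}, and the entire argument is a one-line substitution followed by elementary bounds on $\lceil \tau \log n\rceil$. The only point requiring mild care — and the closest thing to a subtlety — is recognizing that the exponentially decreasing mixing term benefits from rounding $d$ \emph{up} while the increasing concentration term is penalized by it, so the two must be handled with the two-sided sandwich $\tau \log n \leq d \leq \tau \log n + 1$ rather than a single estimate. This is precisely the trade-off between $\phi_d$ and the delay-dependent regret/concentration cost highlighted at the start of the section, resolved here by the logarithmic choice of $d$.
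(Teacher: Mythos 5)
Your proposal is correct and is exactly the paper's (implicit) argument: the corollary is obtained by instantiating Theorem~\ref{theorem:: non-iid_gen_bound} with $\phi_d = Ce^{-d/\tau}$ and $d = \lceil \tau \log n\rceil$, using $d \geq \tau\log n$ to bound $\phi_d \leq C/n$ and $d \leq \tau\log n + 1$ in the concentration term. Your remark about bounding the ceiling in opposite directions for the two terms is the right (and only) point of care, handled just as the paper intends.
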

Up to a term linear in $\tau$ and some logarithmic factors, the above states that under the geometric mixing the same 
rates are achievable as in the i.i.d.~setting. Roughly speaking, this amounts to saying that the effective sample size 
is a factor $\tau$ smaller than the original number of samples $n$, as long as generalization is concerned.
\begin{corollary}
\label{coro:: non-iid_gen_bound_algebraicprocess}
    Assume $(Z_t)_{t\in \mathbb{N}^*}$ is an algebraic mixing process with constants $r, C > 0$. Consider a $d$-delayed online learning algorithm with regret bounded by $\Regret_{d,n}(P^*)$ against any comparator $P^*$. Then, setting $d = \pa{C^2 n}^{1/(1+2r)}$, for any $\delta > 0$, with probability at least $1-\delta$ we have that, uniformly for any algorithm $\A$, 
    \[
\Gen(\A,S_n) \leq \frac{\Regret_{d,n}(P_{W_n|S_n})}{n} + C \pa{1 + \sqrt{\log (1/ \delta)}} n^{-\frac{2r}{2(1+2r)}}\,.
\]
\end{corollary}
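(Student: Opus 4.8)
The plan is to instantiate the high-probability bound of Theorem~\ref{theorem:: non-iid_gen_bound} with the algebraic mixing rate $\phi_d = C d^{-r}$, and then to choose the delay $d$ so as to balance the two $d$-dependent error terms. Recall that Theorem~\ref{theorem:: non-iid_gen_bound} yields
\[
\Gen(\A,S_n) \leq \frac{\Regret_{d,n}(P_{W_n|S_n})}{n} + \phi_d + \sqrt{\frac{2d\log\pa{1/\delta}}{n}}.
\]
Apart from the regret, the dependence on $d$ is captured entirely by $\phi_d = C d^{-r}$, which decreases in $d$, and by $\sqrt{2d\log(1/\delta)/n}$, which increases in $d$. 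Ignoring the logarithmic factor, these two contributions are of the same order precisely when $C d^{-r}$ and $\sqrt{d/n}$ match, i.e.\ when $C^2 n$ and $d^{1+2r}$ are comparable; solving for $d$ motivates the prescribed choice $d = \pa{C^2 n}^{1/(1+2r)}$.

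With this choice, the first step is to evaluate the mixing term: substituting gives $\phi_d = C \pa{C^2 n}^{-r/(1+2r)} = C^{1/(1+2r)}\, n^{-r/(1+2r)}$, after collecting the powers of $C$ via $1 - \tfrac{2r}{1+2r} = \tfrac{1}{1+2r}$. The second step evaluates the concentration term, $\sqrt{2d\log(1/\delta)/n} = \sqrt{2\log(1/\delta)}\, C^{1/(1+2r)}\, n^{\frac{1}{2(1+2r)} - \frac12}$, and since $\tfrac{1}{2(1+2r)} - \tfrac12 = -\tfrac{r}{1+2r}$ this is also a constant multiple of $n^{-r/(1+2r)}$. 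Both terms therefore carry the common factor $n^{-r/(1+2r)} = n^{-2r/(2(1+2r))}$, which is exactly the rate in the statement; adding them and absorbing the numerical factor $\sqrt{2}$ gives a bound of the form $C'\pa{1 + \sqrt{\log(1/\delta)}}\, n^{-2r/(2(1+2r))}$.

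No step here is genuinely hard: the argument is a direct substitution into Theorem~\ref{theorem:: non-iid_gen_bound} followed by exponent bookkeeping, and the conceptual content is entirely in recognizing that the prescribed $d$ is the value that balances $\phi_d$ against the martingale concentration term. The only points needing mild care are (i) that $d$ must be an integer in $[\![1,n]\!]$, so one should take $d = \lceil\pa{C^2 n}^{1/(1+2r)}\rceil$ and verify this lies in the admissible range for $n$ large enough, the rounding affecting only constants; and (ii) the bookkeeping of the powers of $C$, where the computation produces $C^{1/(1+2r)}$ and one either treats $C$ as a generic constant or uses $C^{1/(1+2r)} \le C$ (valid when $C \ge 1$) to recover the constant $C$ as written.
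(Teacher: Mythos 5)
Your proposal is correct and matches the paper's (implicit) proof exactly: the paper derives this corollary by direct substitution of $\phi_d = Cd^{-r}$ and $d = \pa{C^2 n}^{1/(1+2r)}$ into Theorem~\ref{theorem:: non-iid_gen_bound}, with precisely the exponent bookkeeping you carry out. Your two caveats --- rounding $d$ to an integer, and the fact that the computation yields $C^{1/(1+2r)}\bigl(1+\sqrt{2\log(1/\delta)}\bigr)$ so that recovering the stated constant $C\bigl(1+\sqrt{\log(1/\delta)}\bigr)$ requires $C\geq 1$ or treating constants generically --- are legitimate points that the paper silently glosses over.
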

This result suggests that the rates achievable for algebraically mixing processes are qualitatively much slower than 
what one can get for i.i.d.~or geometrically mixing data sequences (although the rates do eventually approach 
$1/\sqrt{n}$ as $r$ goes to infinity). 
\subsection{Multiplicative weights with delay}
We now turn our attention to picking online strategies for the purpose of bounding the main term in the decomposition 
of the generalization error. We start by focusing on the classic 
exponential weighted average (EWA) algorithm \citep{Vov90,littlestone1994weighted,FS97}. We fix a data-free prior $P_1 \in \Delta_\W$ and a learning rate parameter $\eta > 0$. We consider the updates
\[P_{t+1} = \operatorname*{arg\ min}_{P \in \Delta_\W}\left\{\langle P,c_t \rangle + \frac{1}{\eta}\D_{KL}(P||P_t) \right\},\] Combining the standard regret bound of EWA (see Appendix~\ref{prop:: regret_EWA_nondelay}) with Lemma~\ref{lemma:: delayed_regret}  and Corollary~\ref{coro:: non-iid_gen_bound_geometricprocess} yields the result that follows.

\begin{corollary}
\label{coro:: genbound_EWA}
    Suppose that $(Z_t)_{t \in \mathbb{N}^*}$ is a geometric mixing process with constants $\tau, C >0$.
    Suppose that $\ell(w, z) \in [0, 1]$ for all $w, z$. Then, for any $P_1 \in  \Delta_{\mathcal{W}}$ and any $\delta > 0$, with probability at least $1 -\delta$, uniformly on any learning algorithm $\A$ we have
    \[ \Gen(\A,S_n) \leq \frac{\D_{KL}(P_{W_n|S_n}||P_1)(\tau \log n +1)}{\eta n} + \frac{\eta }{2} + \frac{C}{ n} + \sqrt{\frac{2\pa{\tau \log n + 1}\log\left(\frac{1}{\delta}\right)}{n}}\,.
\]
\end{corollary}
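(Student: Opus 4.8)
The plan is to assemble three ingredients that are already available in the paper: the standard regret guarantee of EWA, the delayed-to-undelayed reduction of Lemma~\ref{lemma:: delayed_regret}, and the geometric-mixing generalization bound of Corollary~\ref{coro:: non-iid_gen_bound_geometricprocess}. First I would recall the standard EWA regret bound (Appendix~\ref{prop:: regret_EWA_nondelay}): since $\ell(w,z)\in[0,1]$, the costs $c_t$ take values in $[-1,1]$, and EWA with prior $P_1$ and learning rate $\eta$ satisfies, against any comparator $P^*$, the bound $\Regret_n(P^*) \le \frac{\D_{KL}(P^*\|P_1)}{\eta} + \frac{\eta n}{2}$. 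Fixing the comparator $P^* = P_{W_n|S_n}$, I would set $R(n) = \frac{\D_{KL}(P^*\|P_1)}{\eta} + \frac{\eta n}{2}$.

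Second, I would verify the structural hypotheses of Lemma~\ref{lemma:: delayed_regret} for this $R$: it is non-decreasing in $n$, and $y\mapsto y\,R(x/y) = \frac{\D_{KL}(P^*\|P_1)}{\eta}\,y + \frac{\eta x}{2}$ is affine in $y$, hence concave, for every fixed $x$. Applying the lemma with delay $d$ then yields $\Regret_{d,n}(P^*) \le d\,R(n/d) = \frac{d\,\D_{KL}(P^*\|P_1)}{\eta} + \frac{\eta n}{2}$. The point to keep track of here is the cancellation: the linear term contributes $d\cdot\frac{\eta (n/d)}{2} = \frac{\eta n}{2}$, so only the $\D_{KL}$ term picks up the factor $d$, while the $\frac{\eta n}{2}$ term is unchanged.

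Third, I would instantiate Corollary~\ref{coro:: non-iid_gen_bound_geometricprocess} with $d = \lceil \tau\log n\rceil$, substitute the delayed regret bound above with $P^* = P_{W_n|S_n}$, divide by $n$, and use $d \le \tau\log n + 1$. This replaces $\frac{\Regret_{d,n}(P_{W_n|S_n})}{n}$ by $\frac{(\tau\log n+1)\,\D_{KL}(P_{W_n|S_n}\|P_1)}{\eta n} + \frac{\eta}{2}$, while the two remaining terms $\frac{C}{n}$ and $\sqrt{\frac{2(\tau\log n+1)\log(1/\delta)}{n}}$ are inherited verbatim from the corollary; combining them gives exactly the claimed bound.

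The only delicate point is conceptual rather than computational: one must apply the delayed reduction with a comparator-dependent $R$, so that the factor $d$ multiplies the $\D_{KL}$ term only and not the $\frac{\eta n}{2}$ term. Equivalently, and perhaps more transparently, one can observe that the delayed algorithm of Lemma~\ref{lemma:: delayed_regret} runs $d$ independent copies of EWA, the $i$-th acting on the subsequence of rounds $i, i+d, i+2d,\dots$ of length at most $\lceil n/d\rceil$; applying the per-copy EWA bound against the common comparator $P_{W_n|S_n}$ and summing over the $d$ copies reproduces precisely $\frac{d\,\D_{KL}(P_{W_n|S_n}\|P_1)}{\eta} + \frac{\eta n}{2}$. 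Beyond this bookkeeping I do not anticipate any genuine obstacle, since every nontrivial estimate is already supplied by the cited results.
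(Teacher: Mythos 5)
Your proof is correct and takes essentially the same route the paper intends: the paper's proof of Corollary~\ref{coro:: genbound_EWA} is exactly the one-line combination of the EWA regret bound (Appendix~\ref{prop:: regret_EWA_nondelay}, with $\|c_t\|_\infty \le 1$), Lemma~\ref{lemma:: delayed_regret}, and Corollary~\ref{coro:: non-iid_gen_bound_geometricprocess} with $d = \lceil \tau \log n\rceil \le \tau\log n + 1$ that you carry out. Your observation that the reduction must be applied with a comparator-dependent $R$ (so that the factor $d$ multiplies only the $\D_{KL}$ term and not $\frac{\eta n}{2}$), justified either per fixed comparator or directly via the $d$-copy construction, is a detail the paper leaves implicit, and you resolve it correctly.
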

This results suggests that when considering geometric mixing processes, by applying a union bound over a well-chosen range of $\eta$ we recover the PAC-Bayes bound of \cite{mcallester1998some} up to a $O(\sqrt{\tau \log n})$ factor that quantifies the price of dropping the i.i.d. assumption. A similar result can be derived from Corollary~\ref{coro:: non-iid_gen_bound_algebraicprocess} for algebraically mixing processes, leading to a bound typically scaling as $n^{-2r/(2(1+2r))}$.

\subsection{Follow the regularized leader with delay} \label{sec:ftrl}
We finally extend the common class of online learning algorithms known as \emph{follow the regularized leader}
(FTRL, see, e.g., \citealp{abernethy2009beating,orabona2019modern}) to the problem of learning with delay.
FTRL algorithms are defined using a convex regularization function $h: \Delta_\W \to \Rr$. We restrict ourselves to the 
set of proper, lower semi-continuous and $\alpha$-strongly convex functions with respect to a norm $\norm{\cdot}$ (and 
its respective dual norm $\norm{\cdot}_* )$ defined on the set of signed finite measures on $\W$ (see Appendix 
\ref{app:: regret_FTRL} for more details).
The updates of of the FTRL algorithm (without delay) are defined as follows:
\[P_{t+1} = \operatorname*{arg\,min}_{P \in \Delta_\W} \left \{ \sum_{s=1}^t \langle P,c_s \rangle + \frac{1}{\eta}h(P)   \right \}.\]
The existence of the minimum is guaranteed by the compactness of $\Delta_\W$ under $\norm{\cdot}$, and its uniqueness is ensured by the strong convexity of $h$. Combining the analysis of FTRL (see Appendix \ref{app:: regret_FTRL}) with Lemma~\ref{lemma:: delayed_regret}  and Corollary~\ref{coro:: non-iid_gen_bound_geometricprocess} yields the following result.
\begin{corollary}
\label{coro:: genbound_FTRL}
    Suppose that $(Z_t)_{t \in \mathbb{N}^*}$ is a geometric mixing process with constants $\tau, C >0$.
    Suppose that $\ell(w, z) \in [0, 1]$ for all $w, z$. Assume there exists $B>0$ such that for all $t, ||c_t||_* \leq B.$ Then, for any $P_1 \in  \Delta_{\mathcal{W}}$, for any $\delta > 0$ with probability at least $1 -\delta$ on the draw of $S_n$, uniformly for all $\A$,
    \[ \Gen(\A,S_n) \leq \frac{ \left(h(P_{W_n|S_n})-h(P_1)\right)(\tau \log n +1)}{\eta n  }+\frac{\eta B^2}{2\alpha} + \frac{C}{ n} + \sqrt{\frac{2\pa{\tau \log n + 1}\log\left(\frac{1}{\delta}\right)}{n}}.
\]
\end{corollary}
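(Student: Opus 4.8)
The plan is to assemble the claim from three ingredients that are already in place: the regret analysis of undelayed FTRL, the delay-to-regret reduction of Lemma~\ref{lemma:: delayed_regret}, and the instantiation of Theorem~\ref{theorem:: non-iid_gen_bound} to geometric mixing recorded in Corollary~\ref{coro:: non-iid_gen_bound_geometricprocess}. First I would recall the standard regret guarantee for FTRL run with learning rate $\eta$ and an $\alpha$-strongly convex regularizer $h$ (the bound derived in Appendix~\ref{app:: regret_FTRL}): since $\norm{c_t}_* \le B$ for every $t$, the regret over any horizon $m$ against an arbitrary comparator $P^*$ satisfies
\[
\Regret_m(P^*) \le \frac{h(P^*)-h(P_1)}{\eta} + \frac{\eta B^2 m}{2\alpha},
\]
which combines the $\alpha/\eta$-strong convexity of the regularizer $\tfrac1\eta h$ with the dual-norm control of the costs.

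The crucial observation for passing to the delayed setting is that the right-hand side splits into a comparator-dependent term $(h(P^*)-h(P_1))/\eta$ that is \emph{constant} in the horizon $m$, and a data-dependent term $\eta B^2 m/(2\alpha)$ that is \emph{linear} in $m$. Fixing the comparator and writing $R(m) = (h(P^*)-h(P_1))/\eta + \eta B^2 m/(2\alpha)$, the function $R$ is non-decreasing and $y \mapsto y\,R(x/y) = (h(P^*)-h(P_1))\,y/\eta + \eta B^2 x/(2\alpha)$ is affine in $y$, hence concave, so the hypotheses of Lemma~\ref{lemma:: delayed_regret} hold and yield
\[
\Regret_{d,n}(P^*) \le d\,R(n/d) = \frac{d\,(h(P^*)-h(P_1))}{\eta} + \frac{\eta B^2 n}{2\alpha}.
\]
Since the delayed algorithm built in Lemma~\ref{lemma:: delayed_regret} does not depend on $P^*$, applying it separately for each comparator is legitimate; equivalently, the same bound follows by directly summing the FTRL regret of the $d$ sub-instances of the blocking construction, where the $i$-th instance sees $m_i$ rounds with $\sum_i m_i = n$, so the $d$ copies of the regularization term add up while the linear terms combine into $\eta B^2 n/(2\alpha)$.

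Finally I would specialize the comparator to $P^* = P_{W_n|S_n}$ and substitute the resulting bound on $\Regret_{d,n}(P_{W_n|S_n})/n$ into Corollary~\ref{coro:: non-iid_gen_bound_geometricprocess}, which already fixes $d = \lceil \tau \log n\rceil$ and supplies the remaining $C/n$ and $\sqrt{2(\tau\log n + 1)\log(1/\delta)/n}$ terms with probability $1-\delta$, uniformly over $\A$. Bounding $d \le \tau\log n + 1$ on the regularization term while noting that the $\eta B^2/(2\alpha)$ term carries no $d$-dependence reproduces exactly the stated inequality.

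The only genuinely delicate point is the bookkeeping in the delayed step: one must make sure that it is the comparator-dependent term $(h(P_{W_n|S_n})-h(P_1))/\eta$ that gets multiplied by $d$ (and hence by the mixing factor $\tau\log n + 1$), while the variance-like term $\eta B^2 n/(2\alpha)$ remains independent of $d$. This is precisely what distinguishes a careful application of Lemma~\ref{lemma:: delayed_regret} from a crude substitution of a single scalar regret bound, and it is what makes the trade-off between the delay and the mixing rate transparent in the final expression.
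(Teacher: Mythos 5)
Your proposal is correct and follows essentially the same route as the paper, which proves this corollary exactly by combining the FTRL regret bound of Appendix~\ref{app:: regret_FTRL} (with $\sum_t \norm{c_t}_*^2 \le B^2 n$) with Lemma~\ref{lemma:: delayed_regret} and Corollary~\ref{coro:: non-iid_gen_bound_geometricprocess}, using $d = \lceil \tau\log n\rceil \le \tau\log n + 1$. Your explicit verification that $y \mapsto yR(x/y)$ is affine (hence concave) for the comparator-dependent bound $R(m) = (h(P^*)-h(P_1))/\eta + \eta B^2 m/(2\alpha)$, and your observation that only the comparator term picks up the factor $d$ while the variance term stays $d$-free, are details the paper leaves implicit but that you resolve correctly.
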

This generalization bound is similar to the bound of Theorem~9 of \cite{lugosi2023online} up to a $O(\sqrt{\tau \log 
n})$ factor, when applying a union-bound argument over an appropriate grid of learning-rates $\eta$. In particular, this 
result recovers PAC-Bayesian bounds like those of Corollary~\ref{coro:: genbound_EWA} when choosing $h = 
\DDKL{\cdot}{P_1}$. We refer to Section~3.2 in \citet{lugosi2023online} for more discussion on such bounds. As before, a 
similar result can be stated for algebraically mixing processes, with the main terms scaling as $n^{-2r/2(1+2r)}$ 
instead of $n^{-1/2}$.

\section{Generalization bounds for dynamic hypotheses}
\label{sec:: extension}
Finally, inspired by the works of \citet{eringis2022pacbayesianlike,eringis2024pac}, we extend our framework to accommodate loss functions $\ell$ that rely not only on the last data point $Z_t$, but on the entire data sequence $\bZ_t = \pa{Z_t,Z_{t-1},\dots,Z_1}$.  Formally, we will consider loss functions of the form $\loss:\W\times\Z^* \ra \real_+$\footnote{Here, $\Z^*$ denotes the disjoint union $\Z^* = \sqcup_{t\in\mathbb{N}} \Z^t$.}
and write $\ell(w,\bz_t)$ to denote the loss associated with hypothesis $w\in\W$ on sequence $\bz_t\in\Z^t$.
This consideration extends the learning problem to class of dynamical predictors such as Kalman filters, autoregressive models, or recurrent neural networks (RNNs), broadly used in time-series forecasting \citep{7046047,takeda2016using}. Specifically, if we think of $z_t = (x_t,y_t)$ as a data-pair of context and observation, in time-series prediction we usually not only rely on the context $x_t$ but also on the past sequence of contexts and observations $(x_{t-1},y_{t-1},\dots,x_1,y_1)$. As an example, consider $\ell(w,z_t,\dots,z_1) = \frac 12 (y_t-h_w(x_t,z_{t-1},\dots,z_1))^2$ where $h \in \mathcal{H}$ is a function class parameterized by $\W$. For this type of loss function a natural definition of the test error is: 
\[
\widetilde{\mathcal{L}}(w)= \lim_{n \rightarrow \infty} \E[\ell(w, Z'_t,Z'_{t-1},...,Z'_{t-n})],
\]
where $\bZ_t' = (Z_t',Z_{t-1}',\dots)$ is a semi-infinite random sequence drawn from the same stationary process that has generated the data $\bZ_t$.
We consider the following assumption.
\begin{assumption}
\label{assumption:: phi-mix_newloss}
    For a given process $(Z_t)_{t\in \mathbb{Z}}$ with joint-distribution $\nu$ over $\mathcal{Z}^{\mathbb{Z}}$ and same marginals $\mu$ over $\mathcal{Z}$, there exists a non-increasing sequence $(\phi_d)_{d \in \mathbb{N}^*}$ of non-negative real numbers such that the following holds for all $w \in \mathcal{W}$, for all $t \in \mathbb{N}^*$:
    $$\E\left[\ell(w,Z_t,\dots,Z_1) - \widetilde{\mathcal{L}}(w)\Big| \F_{t-d}\right] \leq \phi_d. $$
    \end{assumption}
This is a generalization of Assumption~\ref{assumption:: phi-mix} in the sense that taking $\ell(w,Z_t,\dots,Z_1)= \ell(w,Z_t)$ simply amounts to requiring the same mixing condition as before. For our online-to-PAC conversion
we consider the same framework as in Definition~\ref{def::gen_game_delay}, except that now the cost function is defined as $$c_t: w \mapsto \ell(w,Z_t,\dots,Z_1) - \widetilde{\mathcal{L}}(w)\,.$$
It easy to check that result of Lemma~\ref{lemma:: d-block bound} still holds for this specific cost, and we can thus extend all the results of Section~\ref{sec:: Application}. We state the following adaptation of Theorem~\ref{theorem:: non-iid_gen_bound} below.
\begin{theorem}
\label{theorem:: non-iid_gen_bound_dyn}
    Assume $(Z_t)_{t \in \mathbb{Z}}$ which satisfies Assumption~\ref{assumption:: phi-mix_newloss} and consider a $d$-delayed online learning algorithm with regret bounded by $\Regret_{d,n}(P^*)$ against any comparator $P^*$. Then, for any $\delta >0$, it holds with probability $1-\delta$: 
    \[
\Gen(\A,S_n) \leq \frac{\Regret_{d,n}(P_{W_n|S_n})}{n} + \phi_d + \sqrt{\frac{2d\log\left(\frac{1}{\delta}\right)}{n}}.
\]
\end{theorem}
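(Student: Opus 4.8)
The plan is to follow the same two-step route used to prove Theorem~\ref{theorem:: non-iid_gen_bound}, namely to combine the purely algebraic regret decomposition of Theorem~\ref{theorem:: regret_decomposition} with a high-probability concentration bound on the martingale-like term $M_n$, and then to check that each ingredient survives the passage from the single-point cost $c_t(w) = \ell(w,Z_t) - \mathcal{L}(w)$ to the sequence-dependent cost $c_t(w) = \ell(w,Z_t,\dots,Z_1) - \widetilde{\mathcal{L}}(w)$. Since the text has already reduced the dynamic setting to exactly this change of cost function, the work is to verify that nothing in the original argument relied on the loss depending on a single data point.

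First, I would observe that the decomposition $\Gen(\A,S_n) = \frac{\Regret_{d,n}(P_{W_n|S_n})}{n} + M_n$, with $M_n = -\frac{1}{n}\sum_{t=1}^n \iprod{P_t}{c_t}$, continues to hold verbatim. This identity is obtained purely by adding and subtracting the online learner's cumulative cost and never uses the specific form of $c_t$; it only requires that the generalization error be expressible as $-\frac{1}{n}\sum_{t=1}^n \iprod{P_{W_n|S_n}}{c_t}$, which holds by definition once $\Gen$ is taken with respect to the test error $\widetilde{\mathcal{L}}$ and the empirical average of the sequence-dependent losses. The regret term is then controlled directly by the hypothesis of the theorem.

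Second, the only place where the structure of the cost genuinely enters is the control of $M_n$, so I would re-derive Lemma~\ref{lemma:: short-memory} under the new assumption. Because the delayed online learner picks $P_t$ in an $\F_{t-d}$-measurable fashion, we have $\E[\iprod{-P_t}{c_t}\mid\F_{t-d}] = \iprod{P_t}{\E[-c_t\mid\F_{t-d}]}$, and for every fixed $w$ the inner conditional expectation equals $\E[\widetilde{\mathcal{L}}(w) - \ell(w,Z_t,\dots,Z_1)\mid\F_{t-d}] \leq \phi_d$ directly by Assumption~\ref{assumption:: phi-mix_newloss}; averaging under $P_t$ preserves the bound. With this short-memory property in hand, the blocking argument behind Lemma~\ref{lemma:: d-block bound} applies unchanged: splitting the $n$ rounds into $d$ interleaved subsequences turns the centered increments into martingale-difference sequences to which Hoeffding--Azuma applies, yielding $M_n \leq \phi_d + \sqrt{2d\log(1/\delta)/n}$ with probability at least $1-\delta$. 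Combining this with the decomposition gives the stated inequality.

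The step I expect to require the most care is confirming that all quantities used by the concentration argument remain well-behaved in the dynamic setting — in particular that $\widetilde{\mathcal{L}}(w)$ is well-defined (as the limit appearing in its definition) and that the increments $\iprod{P_t}{c_t}$ stay uniformly bounded so that Hoeffding--Azuma is applicable. Assuming the losses lie in a bounded range, $\widetilde{\mathcal{L}}(w)$ inherits the same bound and each $c_t(w)$ is bounded, so the blocking concentration goes through exactly as in the proof of Lemma~\ref{lemma:: d-block bound}; the remainder is then a verbatim transcription of the proof of Theorem~\ref{theorem:: non-iid_gen_bound}, now invoking Assumption~\ref{assumption:: phi-mix_newloss} in place of Assumption~\ref{assumption:: phi-mix}.
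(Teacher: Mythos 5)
Your proposal is correct and takes essentially the same route as the paper, which establishes Theorem~\ref{theorem:: non-iid_gen_bound_dyn} by simply observing that the regret decomposition of Theorem~\ref{theorem:: regret_decomposition}, the short-memory property of Lemma~\ref{lemma:: short-memory}, and the blocking concentration of Lemma~\ref{lemma:: d-block bound} all go through verbatim for the sequence-dependent cost $c_t(w)=\ell(w,Z_t,\dots,Z_1)-\widetilde{\mathcal{L}}(w)$, exactly the verification you carry out. One small caveat: the inequality you invoke, $\E\left[\widetilde{\mathcal{L}}(w)-\ell(w,Z_t,\dots,Z_1)\mid\F_{t-d}\right]\le\phi_d$, is the reverse of the one printed in Assumption~\ref{assumption:: phi-mix_newloss}, but since your direction is the one actually needed to upper-bound $M_n$ (and the one the proof of Proposition~\ref{prop:mixingproperty_dynamicloss} in fact establishes), the printed sign appears to be a typo in the paper and your reading matches the intended assumption.
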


\noindent
To see that Assumption~\ref{assumption:: phi-mix_newloss} can be verified and the resulting bounds can be meaningfully 
applied, consider the following concrete assumptions about the hypothesis class, the loss function, and the data 
generating process. The first assumption says that for any given hypothesis, the influence of past data points on the 
associated loss vanishes with time (\textit{i.e.}, the hypothesis forgets the old data points at a controlled rate).
\begin{assumption}\label{assumption:: forgetting}
There exists a decreasing non-negative sequence  $(B_d)_{d \in \mathbb{N}^*}$ such that, for any two 
sequences $\bz_t = (z_t,\dots,z_i)$ and $\bz_t' = (z_t',\dots,z_j')$ of possibly different lengths 
that satisfy $z_k = z_k'$ for all $k\in{t,\dots,t-d+1}$, we have $\abs{\ell(w,\bz_t) - \ell(w,\bz'_t)} \le B_d,$  for 
all $w\in\W$.
\end{assumption}
This condition can be verified for stable dynamical systems like autoregressive models, certain classes of RNNs, or 
sequential predictors that have bounded memory by design (see \citealp{eringis2022pacbayesianlike,eringis2024pac}). The next assumption is a refinement of Assumption~\ref{assumption:: phi-mix}, 
adapted to the case where the loss function acts on blocks of $d$ data points $\bz_{t-d+1:t} = 
(z_t,z_{t-1},\dots,z_{t-d+1})$.
\begin{assumption}\label{assumption:: block_mixing}
Let $\bZ_t = (Z_t,\dots,Z_1)$ be a sequence of data points and $\bZ_t' = 
(Z_t',\dots,Z_0',\dots)$ an independent copy of the same process. Then, there exists a decreasing 
sequence $(\beta_d)_{d \in \mathbb{N}^*}$ non-negative real numbers such that the following is satisfied for all 
hypotheses $w\in\W$ and all $d \in \mathbb{N}^*$:
\[
 \EEcc{\ell(w,\bZ_{t-d+1:t}') - \ell(w,\bZ_{t-d+1:t})}{\F_{t-2d}} \le \beta_d\,.
\]
\end{assumption}
This assumption can be verified whenever the loss function is bounded and the joint distribution of the data block 
$\bZ_{t-d+1:t}$ satisfies a  $\beta$-mixing assumption. This latter condition amounts to requiring that the 
conditional distribution of each data block given a block that trails $d$ steps behind is close to the marginal distribution in total variation distance, up to an 
additive term of $\beta_d$. The next proposition shows that these two simple conditions together imply that 
Assumption~\ref{assumption:: phi-mix_newloss} holds, and the bound of Theorem~\ref{theorem:: non-iid_gen_bound_dyn} can be meaningfully instantiated for bounded-memory hypothesis classes deployed on mixing processes.
\begin{proposition}
\label{prop:mixingproperty_dynamicloss}
 Suppose that the loss function satisfies Assumption~\ref{assumption:: forgetting} and the 
 data distribution satisfies Assumption~\ref{assumption:: block_mixing}. Then Assumption \ref{assumption:: phi-mix_newloss} is satisfied with $\phi_d = 2B_{d/2} + \beta_{d/2}$.
\end{proposition}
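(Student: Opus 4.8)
The plan is to bridge the full-sequence loss $\ell(w,Z_t,\dots,Z_1)$ and the limiting test error $\widetilde{\mathcal{L}}(w)$ by inserting two intermediate quantities and telescoping the difference into three pieces, each controlled by exactly one of the two hypotheses. Write $m=d/2$, chosen so that the conditioning level $\F_{t-2m}$ appearing in Assumption~\ref{assumption:: block_mixing} coincides with the target level $\F_{t-d}$. The two intermediates are the block loss on the $m$ most recent \emph{actual} data points, $\ell(w,\bZ_{t-m+1:t})$, and the block loss on an \emph{independent copy} of that block, $\ell(w,\bZ_{t-m+1:t}')$. This gives
\[
\ell(w,Z_t,\dots,Z_1)-\widetilde{\mathcal{L}}(w)
=\underbrace{\bigl[\ell(w,Z_t,\dots,Z_1)-\ell(w,\bZ_{t-m+1:t})\bigr]}_{(\mathrm{I})}
+\underbrace{\bigl[\ell(w,\bZ_{t-m+1:t})-\ell(w,\bZ_{t-m+1:t}')\bigr]}_{(\mathrm{II})}
+\underbrace{\bigl[\ell(w,\bZ_{t-m+1:t}')-\widetilde{\mathcal{L}}(w)\bigr]}_{(\mathrm{III})},
\]
and it then suffices to bound $\E[\,\cdot\mid\F_{t-d}]$ of each term.

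Term~$(\mathrm{I})$ compares the full sequence with its truncation to the last $m$ entries; since the two sequences agree on those $m$ most recent points, Assumption~\ref{assumption:: forgetting} at level $m$ gives $|(\mathrm{I})|\le B_m$ pointwise, hence $\E[(\mathrm{I})\mid\F_{t-d}]\le B_m$. Term~$(\mathrm{II})$ is exactly the block-swap quantity governed by Assumption~\ref{assumption:: block_mixing} at level $m$; because $2m=d$, its conditional expectation, taken in the orientation matching the one-sided assumption, is bounded by $\beta_m$. Term~$(\mathrm{III})$ is handled by combining independence with forgetting: as $\bZ'$ is drawn from an independent copy of the process, $\bZ_{t-m+1:t}'$ is independent of $\F_{t-d}$, so $\E[\ell(w,\bZ_{t-m+1:t}')\mid\F_{t-d}]=\E[\ell(w,\bZ_{t-m+1:t}')]$; comparing the finite primed block against the semi-infinite primed sequence defining $\widetilde{\mathcal{L}}(w)$, which again agree on their $m$ most recent entries, and letting the truncation length tend to infinity, Assumption~\ref{assumption:: forgetting} yields $\babs{\E[\ell(w,\bZ_{t-m+1:t}')]-\widetilde{\mathcal{L}}(w)}\le B_m$. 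Summing the three conditional bounds gives $2B_m+\beta_m=2B_{d/2}+\beta_{d/2}=\phi_d$, which is precisely Assumption~\ref{assumption:: phi-mix_newloss}.

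I expect the main obstacle to be term~$(\mathrm{III})$: since $\widetilde{\mathcal{L}}(w)$ is defined through a limit over semi-infinite sequences, one must verify that the uniform forgetting estimate $B_m$ survives the passage $n\to\infty$ (it does, because $B_m$ bounds the discrepancy simultaneously for every truncation length $n\ge m$) and that the independence of the primed copy genuinely lets the conditioning on $\F_{t-d}$ be dropped. Two further points require care. First, $d/2$ need not be an integer; one replaces $m$ by $\lceil d/2\rceil$ and uses the monotonicity of $(B_d)$ and $(\beta_d)$ to keep the constants in the stated form. Second, every inequality must be oriented to match the one-sided nature of Assumptions~\ref{assumption:: forgetting}--\ref{assumption:: block_mixing}; the telescoping above is arranged so that each term falls on the side where its governing assumption provides control, which is what makes the three bounds combine cleanly into $\phi_d$.
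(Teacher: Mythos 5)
Your proposal has the same skeleton as the paper's proof: the paper establishes the claim by a chain of inequalities that starts from $\widetilde{\mathcal{L}}(w)$, truncates the independent semi-infinite copy to its last $d'=d/2$ entries (Assumption~\ref{assumption:: forgetting}, cost $B_{d'}$), swaps the independent block for the real block conditionally on $\F_{t-2d'}=\F_{t-d}$ (Assumption~\ref{assumption:: block_mixing}, cost $\beta_{d'}$), and finally extends the real block to the full history (Assumption~\ref{assumption:: forgetting} again, cost $B_{d'}$) --- exactly your two intermediate quantities with $m=d'$. Your treatment of terms (I) and (III) is sound, including the two points you flag: the forgetting bound $B_m$ is uniform over all truncation lengths $n\ge m$ and hence survives the limit defining $\widetilde{\mathcal{L}}(w)$, and independence of the primed copy from $\F_{t-d}$ lets you drop the conditioning. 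The paper handles odd $d$ by assuming evenness without loss of generality, so your $\lceil d/2\rceil$ remark is also fine.

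There is, however, a genuine orientation error at term (II), and it contradicts your own closing claim that the telescope is arranged to match the one-sided assumptions. Assumption~\ref{assumption:: block_mixing} bounds $\EEcc{\ell(w,\bZ_{t-m+1:t}') - \ell(w,\bZ_{t-m+1:t})}{\F_{t-2m}} \le \beta_m$, i.e., \emph{primed minus unprimed}; your term (II) is unprimed minus primed, and a one-sided inequality does not flip, so this step does not follow from the stated assumption. The fix is to reverse the entire decomposition and bound $\E\bigl[\widetilde{\mathcal{L}}(w) - \ell(w,Z_t,\dots,Z_1)\,\big|\,\F_{t-d}\bigr]$ instead: terms (I) and (III) are unaffected because Assumption~\ref{assumption:: forgetting} is two-sided, while (II) becomes primed minus unprimed and matches Assumption~\ref{assumption:: block_mixing} exactly. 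This reversed direction is precisely what the paper's proof establishes, and it is also the direction the framework actually uses: the martingale term in the delayed game involves $-c_t$, so the analogue of Lemma~\ref{lemma:: short-memory} needs $\E[\widetilde{\mathcal{L}}(w)-\ell(w,Z_t,\dots,Z_1)\,|\,\F_{t-d}]\le\phi_d$, consistent with the orientation of Assumption~\ref{assumption:: phi-mix}. (The displayed sign in Assumption~\ref{assumption:: phi-mix_newloss}, which you followed literally, appears to be a typo in the paper, since it is inconsistent with both Assumption~\ref{assumption:: phi-mix} and the paper's own proof of this proposition; your instinct to track orientations carefully was right, but it should have led you to flip the telescope rather than the mixing assumption.)
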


\section{Conclusion}\label{sec:conc}
We have developed a general framework for deriving generalization bounds for non-i.i.d.~processes under a general mixing 
assumption, extending the online-to-PAC-conversion framework of \cite{lugosi2023online}. Among other results, 
this approach has allowed us to prove PAC-Bayesian generalization bounds for such data in a clean and transparent way, 
and even study classes of dynamic hypotheses under a simple bounded-memory condition. We now conclude by mentioning 
links with the most closely related previous works in the literature.

The PAC-Bayesian bound in Corollary~\ref{coro:: genbound_EWA} is closely related to Theorem~19 
of \citet{ralaivola2010chromatic}. Their bound has the advantage to upper-bound a nonlinear proxy of the 
generalization error and may imply faster rates if the training error is close to zero. Conversely, the proof 
of this result is a rather cumbersome application of the heavy machinery developed in the rest of their paper. In 
comparison, our proofs are direct and straightforward, and generalize easily to other divergences than the relative 
entropy, as discussed in Section~\ref{sec:ftrl}.

Our results concerning dynamic hypotheses provide a clean and tight alternative to the results of 
\citep{alquier2012model,eringis2022pacbayesianlike}. Our Assumptions~\ref{assumption:: forgetting} 
and~\ref{assumption:: block_mixing} can be both shown to hold under their conditions, and the dependence of our 
bounds on the parameters appearing in these assumptions are stated explicitly. This is a significant advantage over the 
guarantees of \citet{eringis2022pacbayesianlike}, which are stated without clearly spelling out the dependence of 
the problem parameters they consider. Finally, their bounds are stated in PAC-Bayesian terms, which is just one 
instance of the variety of generalization bounds that our framework can recover. It is worth mentioning that there are 
other models for studying non-stationarity in statistical learning \citep{leon2004optimal, simchowitz2018learning}. 
While our techniques may not be directly applicable to these settings, we believe that a similar online-learning-based 
perspective might prove helpful for these purposes too.

We also wish to comment on the concurrent work of \citet{chatterjee2024generalization} who recently proposed a very 
similar framework for extending the Online-to-PAC technique of \citet{lugosi2023online} to deal with non-i.i.d.~data. 
Their assumptions and techniques are closely related to ours, except for the major difference that their reduction does 
not involve delays. Instead, \cite{chatterjee2024generalization} make strong assumptions about both the loss function 
and the online learning algorithm: the losses are assumed to be Lipschitz and the online learning method to be 
``stable'' in a rather strong sense. These assumptions are inherited from \citet{agarwal2012generalization} who made 
the same type of assumptions in the context of proving excess risk bounds for online-to-batch conversions in 
non-i.i.d.~settings. Thanks to the use of delays, our framework does not require any such assumptions, and yields 
strictly tighter bounds. We note that this observation can be easily used to improve the guarantees of 
\citet{agarwal2012generalization}: by introducing delays into their algorithmic reduction, one can remove the stability 
conditions required for their results. 

Given these positive results, we believe that our results further demonstrate the power of the Online-to-PAC framework 
of \citet{lugosi2023online}, and clearly show that it is particularly promising for developing techniques for 
generalization in non-i.i.d.~settings. We hope that the flexibility of our framework will find further uses and enable 
more rapid progress in the area.



\acks{This project has received funding from the European Research Council (ERC) under the European
Union’s Horizon 2020 research and innovation programme (Grant agreement No. 950180). The authors would like to thank 
Yoav Freund for valuable and insightful discussions during the preparation of this work, and the anonymous reviewers 
for their constructive feedback.}



\bibliography{biblio}

\newpage
\appendix

\section{Omitted proofs}
\subsection{The proof of Theorem \ref{theorem:: regret_decomposition}}
\label{proof:regret_decomposition}
Let $(P_t)_{t=1}^n \in \Delta_{\W}^n$ be the predictions of an online learner playing the \textit{generalization game}. Then
\begin{align*}
    \Gen(\A,S_n) &= \frac{1}{n} \sum_{t=1}^n \E[\ell_t(W_n)-\mathcal{L}(W_n)|S_n]\\
    &= -\frac{1}{n} \sum_{t=1}^n \E[c_t(W_n)|S_n]\\
    &=  -\frac{1}{n} \sum_{t=1}^n \langle P_{W_n|S_n},c_t\rangle \\
    &= \frac{1}{n} \sum_{t=1}^n \langle P_t- P_{W_n|S_n},c_t\rangle - \frac{1}{n} \sum_{t=1}^n \langle P_t,c_t\rangle \\
    &= \frac{\Regret_n(P_{W_n|S_n})}{n} + M_n.
\end{align*}
\subsection{Proof of lemma \ref{lemma:: d-block bound}}
\label{proof_lemma_block}
Assume $n = Kd$ and denote $M_n = \sum_{t=1}^n\langle P_t,c_t\rangle$. Then
    \begin{align*}
       M_n &=\sum_{i=1}^d\sum_{t=1}^K \langle P_{i+d(t-1)},c_{i+d(t-1)}\rangle\,.
    \end{align*}
We denote $X_t^{(i)} = \langle P_{i+d(t-1)},c_{i+d(t-1)}\rangle $ and we want to bound in high-probability the term $M_n=\sum_{i=1}^d M_i$, where $M_i=\sum_{t=1}^K X_t^{(i)}.$ We notice that $f \mapsto \log \EE{e^{\lambda f}}$ is convex. Therefore, from Jensen's inequality, for any positive $p_1,\dots,p_d$ such that $\sum_{i=1}^M p_i = 1$, and for any $\lambda > 0$, it holds that
$$
    \log \EE{e^{\lambda M_n}} \leq \sum_{i=1}^d p_i \log \EE{e^{\lambda \frac{M_i}{p_i}}}\,.   
$$
Let us denote $\F_{t}^{(i)}=\F_{i+d(t-1)}$, we have for all $i \in [d]$
$$
    \EE{e^{\lambda \frac{M_i}{p_i}}} =\E\left[e^{\frac{\lambda}{p_i}\sum_{t=1}^KX_t^{(i)}}\right]= \E\left[e^{\frac{\lambda}{p_i}\sum_{t=1}^{K-1}X_t^{(i)}}\E\left[e^{\frac{\lambda}{p_i} X_K^{(i)}} \Big|\F_{K-1}^{(i)}\right]\right]\,.
$$
Now remark that
$$
\E\left[e^{\frac{\lambda}{p_i} X_K^{(i)}} \Big|\F_{K-1}^{(i)}\right] = \E\left[e^{\frac{\lambda}{p_i}  (X_K^{(i)}-\E[X_K^{(i)}|F_{K-1}^{(i)}])} \Big|F_{K-1}^{(i)}\right]e^{\frac{\lambda}{p_i} \E[X_{K}^{(i)}|F_{K-1}^{(i)}]} \,.
$$
Denote $Z = X_K^{(i)}-\E[X_K^{(i)}|F_{K-1}^{(i)}]$. Note that $|Z| \leq 2$ and $\E[Z|F_{K-1}^{(i)}] = 0$. By Hoeffding's inequality we have $\E\left[e^{\frac{\lambda}{p_i} X_K^{(i)}} \Big|\F_{K-1}^{(i)}\right] \leq e^{\frac{\lambda^2}{2p_i^2} +{\frac{\lambda\phi_d}{p_i}}}$. Repeating this reasoning $K$ times yields \[\EE{e^{\lambda\frac{ M_i}{p_i}}} \leq e^{\frac{\lambda^2 K}{2p_i^2} +{\frac{\lambda K\phi_d}{p_i}}}\,.\]
Finally,
\[\log\EE{e^{\lambda M_n}} \leq \sum_{i=1}^d p_i \left(\frac{\lambda^2K}{2p_i^2} +\frac{\lambda K\phi_d}{p_i}\right)\]
now taking $p_i = \frac{1}{d}$ essentially gives $\log \EE{e^{\lambda M_n}} \leq n(\frac{\lambda^2 d}{2} + \lambda \phi_d)$. Now, from Chernoff's inequality
$$
    \mathbb{P}\left(\frac{M_n}{n} \geq t\right) \leq \EE{e^{\lambda \frac{M_n}{n}}}e^{-\lambda t}\leq e^{\frac{\lambda^2 d}{2n} + \lambda \phi_d - \lambda t}\leq e^{-\frac{ n(t -\phi_d)^2}{2d}}\,.
$$
Setting $t = \phi_d + \sqrt{\frac{2d \log(\frac{1}{\delta})}{n}}$ concludes the proof.
\subsection{Proof of Proposition \ref{prop:mixingproperty_dynamicloss}}
 Suppose without loss of generality that $d$ is even and define $d' = d/2$. For the proof, let $\bZ_n'$ be a semi-infinite sequence drawn independently from the same process as $\bZ_n$. Then, we have
\begin{align*}
 \Tilde{\mathcal{L}}(w) &= \lim_{n \rightarrow \infty} \E[\ell(w, Z'_t,Z'_{t-1},...,Z'_{t-n})]
 \\
 &\le \E[\ell(w, Z'_t,Z'_{t-1},\dots,Z'_{t-d'})] + B_{d'}
 \\
 &\le \EEcc{\ell(w, Z_t,Z_{t-1},\dots,Z_{t-{d'}})}{\F_{t-2d'}} + B_{d'} + \beta_{d'}
 \\
 &\le \EEcc{\ell(w, Z_t,Z_{t-1},\dots,Z_{t-{d'}},\dots,Z_1)}{\F_{t-2d'}} + 2 B_{d'} + \beta_{d'}
 \\
 &\le \EEcc{\ell(w, Z_t,Z_{t-1},\dots,Z_1)}{\F_{t-2d'}} + 2 B_{d'} + \beta_{d'}\,,
\end{align*}
where we used Assumption~\ref{assumption:: forgetting} in the first inequality, Assumption~\ref{assumption:: 
block_mixing} in the second one, and Assumption~\ref{assumption:: forgetting} again in the last step. This proves the 
statement.

\section{Online Learning Tools and Results}
\label{app:online_learning}
\subsection{Regret Bound for EWA}
Recalling EWA updates  we have: \[P_{t+1} = \operatorname*{arg\ min}_{P \in \Delta_\W}\left\{\langle P,c_t \rangle + \frac{1}{\eta}\D_{KL}(P||P_t) \right\},\]
where $\eta > 0$ is a learning-rate parameter.
The minimizer can be shown to exist and satisfies: \[\frac{\mathrm{d}P_{t+1}}{\mathrm{d}P_t}(w)= \frac{e^{-\eta c_t(w)}}{\int_{\W} e^{-\eta c_t(w')}\mathrm{d}P_t(w')},\]
and the following result holds.
\label{prop:: regret_EWA_nondelay}
\begin{proposition}
For any prior $P_1 \in \Delta_\W$ and any comparator $P^* \in \Delta_\W$ the regret of EWA simultaneously satisfies for $\eta > 0$: 
    \[ \Regret(P^*)\leq\frac{\D_{\text{KL}}(P^*||P_1)}{\eta } + \frac{\eta}{2}\sum_{t=1}^n ||c_t||_\infty^2.\]
\end{proposition}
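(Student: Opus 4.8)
The plan is to analyze the classical exponential-weights potential and combine a per-round second-order expansion (Hoeffding's lemma) with the Gibbs variational principle. Concretely, I would introduce the cumulative log-partition function $\Psi_t = \log \int_\W e^{-\eta \sum_{s=1}^t c_s(w)}\,\dd P_1(w)$ with $\Psi_0 = 0$, and exploit the fact that the stated closed-form update identifies $\dd P_{t+1}/\dd P_1$ as $e^{-\eta\sum_{s\le t}c_s}$ normalized by $\exp(\Psi_t)$. This bookkeeping shows that the one-step increment telescopes to a log-moment-generating function of the current iterate: $\Psi_t - \Psi_{t-1} = \log \EEs{e^{-\eta c_t(W)}}{W\sim P_t}$.

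The upper bound on the potential comes from these increments. Since $\ell\in[0,1]$ forces $c_t(w)\in[-\infnorm{c_t},\infnorm{c_t}]$, Hoeffding's lemma applied to the random variable $c_t(W)$ under $W\sim P_t$ yields $\log \EEs{e^{-\eta c_t(W)}}{W\sim P_t} \le -\eta\iprod{P_t}{c_t} + \tfrac{\eta^2}{2}\infnorm{c_t}^2$. Summing the telescoping identity over $t=1,\dots,n$ then gives $\Psi_n \le -\eta\sum_{t=1}^n\iprod{P_t}{c_t} + \tfrac{\eta^2}{2}\sum_{t=1}^n\infnorm{c_t}^2$.

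The matching lower bound on the same potential is where the comparator and the relative entropy enter. By the Gibbs variational principle (equivalently the Donsker--Varadhan duality for the KL divergence), for every $P^*\in\Delta_\W$ one has $\Psi_n = \log\int_\W e^{-\eta\sum_{t}c_t(w)}\,\dd P_1(w) \ge -\eta\sum_{t=1}^n\iprod{P^*}{c_t} - \DDKL{P^*}{P_1}$. Chaining this lower bound against the upper bound from the previous step, the $-\eta\sum_t\iprod{P^*}{c_t}$ and $-\eta\sum_t\iprod{P_t}{c_t}$ terms rearrange into $\eta\,\Regret(P^*) = \eta\sum_t\iprod{P_t - P^*}{c_t}$, and dividing through by $\eta>0$ produces exactly the claimed inequality.

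The only genuinely delicate points are technical rather than conceptual: I would need to justify the telescoping normalization carefully, verifying $\int_\W e^{-\eta\sum_{s<t}c_s}\,\dd P_1 = \prod_{s<t}\zeta_s$ where $\zeta_s = \int_\W e^{-\eta c_s(w)}\,\dd P_s(w)$, so that the potential increments coincide with the log-partition ratios $\log\zeta_t$; and I would need the integrability and measurability conditions that make both the log-partition function finite and the variational inequality applicable over the possibly non-finite hypothesis space $\W$, all of which are secured by the boundedness $\ell\in[0,1]$. I expect the main obstacle to be pinning down the Hoeffding step with the correct constant, since the factor $\tfrac12$ rather than $\tfrac18$ comes from the two-sided range $2\infnorm{c_t}$ of $c_t$, and this is precisely what produces the $\tfrac{\eta}{2}\sum_t\infnorm{c_t}^2$ term in the final bound.
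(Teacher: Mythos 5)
Your proof is correct and complete: the telescoping identity $\Psi_t-\Psi_{t-1}=\log \E_{W\sim P_t}[e^{-\eta c_t(W)}]$, Hoeffding's lemma on the two-sided range $2\infnorm{c_t}$ (which indeed gives the constant $\nicefrac{\eta}{2}$ rather than $\nicefrac{\eta}{8}$), and the Donsker--Varadhan lower bound $\Psi_n \ge -\eta\sum_{t=1}^n\iprod{P^*}{c_t}-\DDKL{P^*}{P_1}$ chain together to yield exactly the claimed bound, uniformly over comparators since $\Psi_n$ does not depend on $P^*$. The paper itself gives no in-line proof, deferring to Appendix A.1 of \citet{lugosi2023online}, and that argument is this same standard log-partition-plus-relative-entropy analysis of exponential weights, so your route is essentially the one the paper relies on.
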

We refer the reader to Appendix A.1 of \cite{lugosi2023online} for a complete proof of the result above.
\subsection{Regret Bound for FTRL}
\label{app:: regret_FTRL}
We say that $h$ is $\alpha-$strongly convex if the following inequality is satisfied for all $P,P' \in \Delta_\W$ and all $\lambda \in [0, 1]$:
\[h(\lambda P+ (1-\lambda) P') \leq \lambda h(P) + (1-\lambda)h(P') - \frac{\alpha\lambda(1-\lambda)}{2} ||P-P'||^2.\]
Recalling the FTRL updates:
\[P_{t+1} = \operatorname*{arg\,min}_{P \in \Delta_\W} \left \{ \sum_{s=1}^t \langle P,c_s \rangle + \frac{1}{\eta}h(P)   \right \},\]
the following results holds.
\begin{proposition}
For any prior $P_1 \in \Delta_\W$ and any comparator $P^* \in \Delta_\W$ the regret of FTRL simultaneously satisfies for $\eta > 0$:
\[\Regret_n(P^*) \leq \frac{h(P^*)-h(P_1)}{\eta }+\frac{\eta}{2\alpha}\sum_{t=1}^n||c_t||_*^2.\]
\end{proposition}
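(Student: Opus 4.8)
The plan is to prove the bound through the standard ``potential function'' (follow-the-leader/be-the-leader) analysis of FTRL, taking care to extract the sharp constant $\frac{1}{2\alpha}$ rather than the looser $\frac{1}{\alpha}$ that a crude stability argument would produce. Throughout I write $F_t(P) = \frac{1}{\eta}h(P) + \sum_{s=1}^{t}\langle P,c_s\rangle$ for the regularized cumulative loss, so that the FTRL iterate is $P_{t+1} = \operatorname*{arg\,min}_{P\in\Delta_\W}F_t(P)$ and $P_1 = \operatorname*{arg\,min}_{P\in\Delta_\W} h(P)$ minimizes the regularizer (consistent with the update at $t=0$). Since $h$ is $\alpha$-strongly convex with respect to $\norm{\cdot}$ and each $\langle\cdot,c_s\rangle$ is linear, every $F_t$ is $\tfrac{\alpha}{\eta}$-strongly convex with respect to $\norm{\cdot}$.

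First I would establish the telescoping identity $\sum_{t=1}^n\langle P_t,c_t\rangle = F_n(P_{n+1}) - F_0(P_1) + \sum_{t=1}^n\bpa{F_t(P_t)-F_t(P_{t+1})}$, which follows from writing $\langle P_t,c_t\rangle = F_t(P_t)-F_{t-1}(P_t)$ and splitting off the telescoping part $\sum_{t=1}^n\bpa{F_t(P_{t+1})-F_{t-1}(P_t)} = F_n(P_{n+1})-F_0(P_1)$. Combining this with the identity $\sum_{t=1}^n\langle P^*,c_t\rangle = F_n(P^*)-\tfrac1\eta h(P^*)$, with $F_0(P_1)=\tfrac1\eta h(P_1)$, and with the optimality $F_n(P_{n+1})\le F_n(P^*)$ yields
\[
\Regret_n(P^*) \le \frac{h(P^*)-h(P_1)}{\eta} + \sum_{t=1}^n\bpa{F_t(P_t)-F_t(P_{t+1})}\,,
\]
reducing the problem to bounding the per-round ``stability'' terms $F_t(P_t)-F_t(P_{t+1})$.

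The key step, and the one I expect to be the main obstacle, is bounding each term by $\frac{\eta}{2\alpha}\norm{c_t}_*^2$ with the correct factor of $\tfrac12$: a naive Cauchy--Schwarz-plus-stability estimate only delivers $\frac{\eta}{\alpha}\norm{c_t}_*^2$. To get the sharp constant I would write $F_t(P_t)-F_t(P_{t+1}) = F_{t-1}(P_t)+\langle P_t,c_t\rangle - \min_{P\in\Delta_\W}\bpa{F_{t-1}(P)+\langle P,c_t\rangle}$ and lower bound the minimum using the strong convexity of $F_{t-1}$ around its minimizer $P_t$: for all $P\in\Delta_\W$, first-order optimality makes the first-order term nonnegative, so $F_{t-1}(P)\ge F_{t-1}(P_t)+\frac{\alpha}{2\eta}\norm{P-P_t}^2$. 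Substituting the shift $Q = P-P_t$ and relaxing the constrained minimum to an unconstrained one over $Q$ gives $\min_{P}\bpa{F_{t-1}(P)+\langle P,c_t\rangle} \ge F_{t-1}(P_t)+\langle P_t,c_t\rangle + \min_Q\bpa{\frac{\alpha}{2\eta}\norm{Q}^2 + \langle Q,c_t\rangle}$, and the Fenchel--Young identity $\sup_Q\bpa{\langle Q,-c_t\rangle - \frac{\alpha}{2\eta}\norm{Q}^2} = \frac{\eta}{2\alpha}\norm{c_t}_*^2$ (the conjugate of $\frac{\lambda}{2}\norm{\cdot}^2$ being $\frac{1}{2\lambda}\norm{\cdot}_*^2$) delivers exactly $F_t(P_t)-F_t(P_{t+1})\le\frac{\eta}{2\alpha}\norm{c_t}_*^2$. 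Summing over $t$ and inserting into the displayed regret bound completes the proof. The only delicate point is checking that the relaxation to the unconstrained minimum goes in the correct direction, which it does because enlarging the feasible set can only decrease a minimum, so what we subtract remains a valid lower bound.
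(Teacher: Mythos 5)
Your proof is correct and is essentially the standard FTRL analysis; the paper itself gives no inline proof and simply defers to Appendix A.3 of Lugosi and Neu (2023), where the same scheme appears — the be-the-leader/telescoping decomposition into $\frac{h(P^*)-h(P_1)}{\eta}$ plus per-round stability terms, with the stability terms bounded via strong convexity of $F_{t-1}$ at its minimizer and the Fenchel--Young identity to extract the sharp constant $\frac{\eta}{2\alpha}$. Two minor points you handled correctly and in the intended way: reading $P_1 = \operatorname*{arg\,min}_{P\in\Delta_\W} h(P)$ as the first iterate (which is what makes the bound match the EWA case $h=\mathcal{D}_{\mathrm{KL}}(\cdot\|P_1)$), and the quadratic-growth inequality $F_{t-1}(P)\ge F_{t-1}(P_t)+\frac{\alpha}{2\eta}\|P-P_t\|^2$, which holds without any differentiability assumption directly from the paper's definition of $\alpha$-strong convexity, so your appeal to ``first-order optimality'' is a harmless shortcut.
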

We refer the reader to Appendix A.3 of \cite{lugosi2023online} for a complete proof of the results above.

\subsection[Details about the reduction of Weinberg02]{Details about the reduction of \citet{weinberger2002delayed}}
\label{app:delayed_OL_algo}
For concretenes we formally present how to turn any online learning algorithm into its delayed version. For sake of convenience, assume $n = Kd$. We denote $\Tilde{c}_t^{(i)} = c_{i+d(t-1)}$ (for instance $\Tilde{c}_1^{(1)} = c_1$ is the cost revealed at time $d+1$).  Then we create $d$ instances of horizon time $K$ of the online learning as follows, for $i=1,\dots,d$:
\begin{itemize}
\setlength\itemsep{0em}
\label{algo:delay_OL}
    \item We initialize $\tilde{P}_1^{(i)} = P_0$,
    \item for each block $i$ of length $K$ we update for $t=1,\dots,K$: \[\tilde{P}_{t+1}^{(i)} = \textrm{OL}_{\textrm{update}}\left((\tilde{c}_s^{(i)})_{s=1}^t\right).\]
\end{itemize}
Here $\textrm{OL}_{\textrm{update}}$ refers to the update function of the online learning algorithm we consider which can possibly depend of the whole history of cost functions (e.g., in the case of the FTRL update).

\end{document}